\newtheoremstyle{mydefn}
{}{}
{\it}       
{0pt}       
{\bfseries} 
{:~}        
{0.25em}    
{}          
\theoremstyle{mydefn}
\newtheorem{definition}{Definition}[section]
\newtheorem{proposition}{Proposition}[section]
\newtheorem{theorem}{Theorem}[section]
\newtheoremstyle{myexample}
{}{}
{}          
{0pt}       
{\bfseries} 
{:~}        
{0.25em}    
{}          
\theoremstyle{myexample}
\newtheorem{example}{Example}[section]
\newif\if@restonecol
\renewcommand{\paragraph}[1]{\smallskip\noindent\textbf{#1.}}
\renewcommand{\subparagraph}[1]{\smallskip\noindent\textbf{\underline{#1.}}}
\renewenvironment{proof}{\noindent{\bfseries Proof:}}{\qed \smallskip}
\DeclareMathOperator*{\argmin}{arg\,min}
\newcommand{\ip}[1]{\langle #1 \rangle}
\newcommand{\Norm}[1]{\left\lVert #1 \right\rVert}
\def\Vec#1{{\boldsymbol{#1}}}
\def\Space#1{{\mathbb{#1}}}
\def\Set#1{{\mathcal{#1}}}
\def\tVec#1{{\tilde{\boldsymbol{#1}}}}
\def\Mat#1{{\boldsymbol{#1}}}
\def\Size#1{\left| #1 \right|}
\def\term#1{\texttt{#1}}
\newtcolorbox{graybox}{
  colback=gray!10,
  colframe=gray!50,
  boxrule=1pt,
  arc=3pt,
  left=5pt,
  right=5pt,
  top=2pt,
  bottom=0pt
}
\title{One Swallow Does Not Make a Summer: Understanding Semantic Structures in Embedding Spaces}
\author{
    Yandong Sun,\textsuperscript{\rm 1}
    Qiang Huang,\textsuperscript{\rm 2}
    Ziwei Xu,\textsuperscript{\rm 1}
    Yiqun Sun,\textsuperscript{\rm 1}
    Yixuan Tang,\textsuperscript{\rm 1}
    Anthony K. H. Tung\textsuperscript{\rm 1}
}
\begin{document}
\maketitle

\begin{abstract}
Embedding spaces are fundamental to modern AI, translating raw data into high-dimensional vectors that encode rich semantic relationships.  
Yet, their internal structures remain opaque, with existing approaches often sacrificing semantic coherence for structural regularity or incurring high computational overhead to improve interpretability.  
To address these challenges, we introduce the \textbf{Semantic Field Subspace (SFS)}, a geometry-preserving, context-aware representation that captures local semantic neighborhoods within the embedding space.
We also propose \textbf{SAFARI} (\textbf{S}em\textbf{A}ntic \textbf{F}ield subsp\textbf{A}ce dete\textbf{R}m\textbf{I}nation), an unsupervised, modality-agnostic algorithm that uncovers hierarchical semantic structures using a novel metric called Semantic Shift, which quantifies how semantics evolve as \textbf{SFSes} evolve.  
To ensure scalability, we develop an efficient approximation of Semantic Shift that replaces costly SVD computations, achieving a 15$\sim$30$\times$ speedup with average errors below 0.01.  
Extensive evaluations across six real-world text and image datasets show that \textbf{SFSes} outperform standard classifiers not only in classification but also in nuanced tasks such as political bias detection, while \textbf{SAFARI} consistently reveals interpretable and generalizable semantic hierarchies.  
This work presents a unified framework for structuring, analyzing, and scaling semantic understanding in embedding spaces.
\end{abstract}

\section{Introduction}
\label{sect:intro}

Embedding spaces are foundational to modern AI systems, converting unstructured inputs--such as text, images, and time series--into dense vectors that capture semantic properties in a tractable format.
By translating semantic similarity into geometric proximity,  these spaces enable efficient comparison, retrieval, and manipulation across modalities.
This property supports a wide range of applications, including knowledge retrieval \cite{lewis2020retrieval, guu2020realm, ram2023context, asai2024self}, personalized and diverse recommendations \citep{gan2020enhancing, hirata2022solving, huang2024diversity, sun2024diversinews}, and multimodal understanding \cite{yu2019multimodal, luo2023semantic, zhang2024learnability, yu2023self}.
Yet, despite their centrality, embedding spaces are often treated as black boxes, limiting interpretability and constraining targeted adaptation for downstream tasks.

Research on understanding embedding spaces generally falls into two directions.
The first analyzes geometric properties to enhance both representational quality and interpretability.
In Natural Language Processing (NLP), post-processing methods have tackled issues such as anisotropy and instability \cite{mu2018all, liu2019unsupervised}, while studies on contextual embeddings reveal expressivity limits \cite{ethayarajh2019contextual}. 
Techniques like rotation-based alignment and probing further link embedding dimensions to interpretable concepts \cite{park2017rotated, dufter2019analytical, dalvi2019one, clark2019does}.
Similar geometric challenges, e.g., feature collapse and variance concentration, are found in visual \cite{chen2020simple, he2020momentum, grill2020bootstrap} and multimodal embeddings \cite{radford2021learning, jia2021scaling}.

The second direction explores latent semantic and hierarchical structures within embedding spaces.
In NLP, embeddings have been mapped to external conceptual systems for flexible semantic interpretations \cite{simhi2023interpreting}, while in vision, researchers have identified neurons encoding abstract, multimodal concepts \cite{goh2021multimodal}.
Unsupervised clustering \cite{van2020scan, caron2020unsupervised} uncover semantic groupings, and hierarchical classification approaches \cite{deng2012hedging, dhall2020hierarchical} organize semantics into multi-level taxonomies.

Despite substantial progress, embedding spaces remain intrinsically opaque due to three fundamental challenges:
\begin{itemize}
  \item \textbf{Abstract Semantics:} 
  Embeddings inhabit high dimensional spaces where complex, abstract relationships emerge, defying straightforward interpretation. 
  Many existing methods enhance interoperability by restructuring embeddings, but in doing so, they often distort the native geometry, thereby undermining their practical utility.
  
  \item \textbf{Lack of Explicit Structure:} 
  While semantics are inherently structured, real-world embeddings often exhibit diffuse and irregular patterns. 
  Existing methods either overlook this latent structure or impose rigid taxonomies, limiting flexibility across tasks and domains.

  \item \textbf{Limited Modality Generalization:} 
  Semantic meaning transcends text to include visual, auditory, and other modalities. 
  However, most techniques are modality-specific and lack a unified framework for revealing semantic structures across diverse embedding spaces.
\end{itemize}

This paper investigates semantic structures directly within native embedding spaces,
without re-embedding, restructuring, or imposing external constraints that alter their original geometry.
To address the abstract and opaque nature of high-dimensional embeddings, we introduce a new semantic representation, \textbf{Semantic Fields Subspaces (SFSes)}, along with \textbf{SAFARI} (\textbf{S}em\textbf{A}ntic \textbf{F}ield subsp\textbf{A}ce dete\textbf{R}m\textbf{I}nation), an unsupervised, modality-agnostic algorithm that discovers and organizes semantic structures hierarchically.
Our key contributions are as follows:
\begin{itemize}
  \item \textbf{Interpretable Semantic Representation:} 
  We introduce SFSes, a context-aware, geometry-preserving representation that captures semantic meaning through local neighborhoods, offering interpretability without distorting the embedding space.

  \item \textbf{Unsupervised Hierarchical Structure Discovery:} 
  We propose SAFARI, which leverages a novel \emph{Semantic Shift} metric to uncover hierarchical structures.
  A scalable approximation of Semantic Shift enables SAFARI to process large datasets with minimal accuracy loss.
  
  \item \textbf{Modality-Agnostic Generalization:} 
  SFSes and SAFARI are inherently modality-agnostic, uncovering hierarchical semantic structures across both text and image modalities without supervision or external ontologies.
\end{itemize}

We validate our framework on six real-world datasets across text and image modalities.
\textbf{SAFARI} uncovers how local neighborhoods form meaningful global hierarchies, while \textbf{SFSes} outperform standard classifiers on text classification, deliver competitive image classification with far lower computational cost, and capture subtle semantics (e.g., political bias) often missed by conventional methods.
Our Semantic Shift approximation achieves a 15$\sim$30$\times$ speedup over full SVD, with average errors below 0.01, ensuring both efficiency and accuracy.
Together, \textbf{SFSes} and \textbf{SAFARI} form a unified, interpretable, and scalable framework for semantic understanding within embedding spaces.

\section{Related Work}
\label{sect:related}

Research on understanding embedding spaces generally follows two directions: (1) structural analysis of geometric properties and (2) discovery of latent semantic hierarchies.

\paragraph{Structural Analysis of Embedding Spaces}
Early work focused on the geometric properties of embeddings, particularly in NLP.
\citet{mu2018all} mitigated anisotropy by removing dominant principal components, while \citet{liu2019unsupervised} stabilized embedding distributions by suppressing high-variance dimensions.
\citet{ethayarajh2019contextual} found that contextualized embeddings often cluster in narrow cones, limiting expressiveness.
To improve interpretability, rotation-based alignment \cite{park2017rotated, dufter2019analytical} and probing techniques \cite{dalvi2019one, clark2019does} linked embedding dimensions to human-understandable concepts.

Similar geometric issues, such as feature collapse and variance concentration, also arise in visual representations from self-supervised models like SimCLR \cite{chen2020simple}, MoCo \cite{he2020momentum}, and BYOL \cite{grill2020bootstrap}. 
\citet{wang2020understanding} analyzed such effects in contrastive learning, while feature visualization methods \cite{olah2017feature, zhou2016learning} offered neuron-level insights.
Recent advances in multimodal embeddings, e.g., CLIP \cite{radford2021learning}, ALIGN \cite{jia2021scaling}, and DeCLIP \cite{li2022supervision}, explore joint semantic alignment across modalities.
Unlike these approaches, which often modify embedding spaces, \textbf{SAFARI} preserves native geometry while enhancing interpretability.

\paragraph{Semantic and Hierarchical Structure Discovery}
A complementary line of research seeks to uncover semantic groupings and hierarchies within embedding spaces.
In NLP, \citet{simhi2023interpreting} maps embeddings into conceptual spaces grounded in knowledge bases, enabling flexible interpretations.
In vision, \citet{goh2021multimodal} found that certain CLIP neurons respond to abstract concepts shared across modalities. 
Unsupervised methods \cite{van2020scan, caron2020unsupervised} discover coherent visual clusters but lack hierarchical organization.
Conversely, hierarchical classification methods \cite{deng2012hedging, dhall2020hierarchical}  build multi-level structures using taxonomies like WordNet, yet depend on external supervision and predefined label trees.
\textbf{SAFARI} bridges these gaps with a unified, unsupervised, and modality-agnostic framework that identifies semantic hierarchies directly from embedding spaces--without structural transformation or supervision.

\section{Problem Formulation}
\label{sect:problem}

\paragraph{Vector Space Foundation}
We model embedding spaces as a vector space $\mathbb{R}^d$. 
For ease of explaining the core concepts in this paper, we use natural language terms as illustrative examples, though all concepts introduced are modality-agnostic.
Let $h: \Set{T} \rightarrow \Set{E}$ be a deep model that maps real-world terms $\Set{T}$ to embedding vectors $\Set{E}$.
A central assumption is that \textbf{geometric distances reflect semantic similarities}, a principle validated in various tasks ~\cite{karpukhin2020dense, lewis2020retrieval, ram2023context}.
In this work, we adopt cosine distance as a proxy for semantic dissimilarity. 
\begin{definition}[Semantic Distance]
\label{def:semantic_distance}
  The semantic distance~$d_{sem}(\cdot,\cdot)$ between two embedding vectors $\Vec{u}, \Vec{v} \in \Space{R}^d$ is defined as $d_{sem}(\Vec{u}, \Vec{v}) = 1 - \tfrac{\ip{\Vec{u}, \Vec{v}}}{\Norm{\Vec{u}} \Norm{\Vec{v}}}$.
\end{definition}

\paragraph{Challenge of Context-Dependent Meaning}
Despite precise embeddings, \textbf{semantic meaning depends on context}. 
Linguistic theories such as semantic field theory and componential analysis~\cite{ullmann1957principles, nida2015componential} argue that meaning arises only through contextual associations.
\begin{proposition}[Context-Dependent Meaning]
\label{prop:word_context}
  An embedding vector cannot be semantically interpreted in isolation.
\end{proposition}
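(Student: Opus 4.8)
The plan is to argue by contradiction and lean on the two structural facts already committed to in the excerpt: (i) the working hypothesis that geometric distances reflect semantic similarities, and (ii) the linguistic principle from semantic field theory and componential analysis that meaning arises only through contextual associations. Concretely, suppose for contradiction that some embedding vector $\Vec{v} = h(t)$ for a term $t \in \Set{T}$ admitted a well-defined semantic interpretation in isolation, i.e.\ independent of any surrounding set of embeddings. I would formalize ``interpretation in isolation'' as the existence of a semantic content assignment that depends only on the single point $\Vec{v} \in \Space{R}^d$ and on nothing else in $\Set{E}$.

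First I would observe that, under assumption (i), the only information $\Vec{v}$ carries that is semantically meaningful is \emph{relational}: by Definition~\ref{def:semantic_distance}, semantic dissimilarity is realized through $d_{sem}(\Vec{v}, \cdot)$, a function of \emph{pairs}. An isolated point, absent any other embeddings to compare against, exposes none of these pairwise quantities, so any purported isolated interpretation must be invariant under transformations of $\Space{R}^d$ that fix $\Vec{v}$ but arbitrarily permute the rest of the space. Second, I would invoke the linguistic premise (ii): semantic field theory and componential analysis assert that the meaning of $t$ is constituted by its oppositions and affinities within a field of related terms — that is, precisely by the pattern of $d_{sem}(\Vec{v}, h(t'))$ across neighbors $t'$. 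Hence a genuine semantic content for $\Vec{v}$ is \emph{defined} to be a function of that neighborhood pattern, which contradicts its being a function of $\Vec{v}$ alone. The two observations together close the loop: isolation forces invariance to the neighborhood, while meaning is by hypothesis determined by the neighborhood, so a non-trivial isolated interpretation cannot exist.

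To make the invariance argument crisp I would exhibit, for any fixed $\Vec{v}$, two embedding configurations $\Set{E}_1, \Set{E}_2 \subset \Space{R}^d$ that both contain $\Vec{v}$ but in which $\Vec{v}$ sits in semantically incompatible neighborhoods — e.g.\ one where its nearest neighbors are embeddings of synonyms of $t$ and one where (after an isometry of the ambient space that keeps $\Vec{v}$ fixed) its nearest neighbors are embeddings of unrelated terms. Since such an isometry exists in $\Space{R}^d$ for $d \ge 2$ and changes every $d_{sem}(\Vec{v}, \cdot)$ while leaving $\Vec{v}$ literally unchanged, any interpretation depending on $\Vec{v}$ alone would have to assign the same meaning in both configurations, while assumptions (i)--(ii) demand different meanings. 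That is the desired contradiction.

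The main obstacle is definitional rather than technical: the statement is essentially a conceptual claim, so the work is in choosing a formalization of ``semantically interpreted'' that is faithful to the paper's stated assumptions yet precise enough to support a contradiction. I would keep this lightweight — defining an isolated interpretation as a map $\Set{E} \to (\text{semantic content})$ that factors through the single-point projection, and a contextual interpretation as one that factors through the local neighborhood under $d_{sem}$ — and then the proof is the short invariance-versus-dependence argument above. I would flag that the result should be read as a \emph{motivating} proposition justifying the move to neighborhood-based representations (the SFSes introduced next), not as a deep theorem, and keep the proof correspondingly brief.
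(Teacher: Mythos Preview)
The paper does not actually prove this proposition. It is stated as a foundational premise, justified immediately before by citation to linguistic theories (semantic field theory and componential analysis) and illustrated immediately after by Example~\ref{exp:apple} --- the \term{Apple} example --- which shows the same term acquiring different readings depending on which neighbors surround it. No proof environment follows the statement; its role is precisely what you identify in your final paragraph, a motivating claim that sets up the move to Semantic Fields and SFSes.

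Your attempt to supply a genuine argument therefore goes beyond what the paper does, and the conceptual core --- that under assumption (i) the only semantically meaningful data carried by $\Vec{v}$ is encoded in the pairwise quantities $d_{sem}(\Vec{v},\cdot)$, hence unrecoverable from a lone point --- is sound and matches the spirit of the paper's informal justification. One technical wobble to fix: in your ``crisp'' paragraph you invoke an isometry of $\Space{R}^d$ fixing $\Vec{v}$ and claim it ``changes every $d_{sem}(\Vec{v},\cdot)$.'' But an isometry fixing $\Vec{v}$ preserves all Euclidean distances from $\Vec{v}$, and an orthogonal map fixing the line through $\Vec{v}$ preserves all cosine distances from $\Vec{v}$, so the neighborhood \emph{as seen from} $\Vec{v}$ is unchanged and the contradiction does not fire. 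What you actually need is not an isometry but simply two different context sets $\Set{E}_1,\Set{E}_2 \ni \Vec{v}$ populated by different term embeddings --- which is exactly the paper's \term{Apple} illustration. Drop the isometry language and point directly to the freedom in choosing the surrounding set, and your argument stands.
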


\begin{figure}[ht]
  \centering
  \includegraphics[width=0.99\columnwidth]{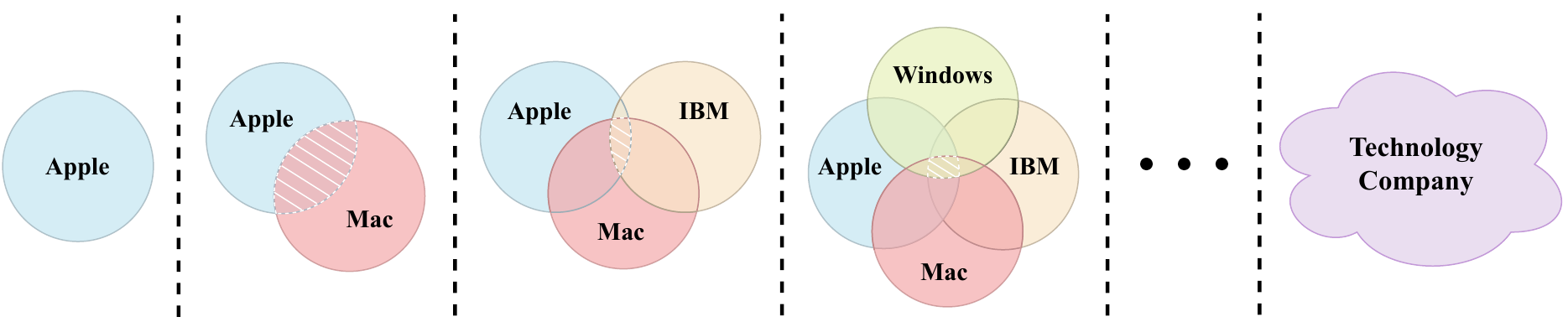}
  \caption{Contextual interpretation of \term{Apple}: Meaning refines as more related terms are introduced.}
  \label{fig:apple_example}
\end{figure}

\begin{example}
\label{exp:apple}
  The term \term{Apple} is semantically ambiguous and relies on context for disambiguation.
  As shown in Figure \ref{fig:apple_example}, it refers to a tech company when grouped with \term{Mac}, \term{IBM}, and \term{Windows}, \term{Apple}, but to a fruit with \term{Apple Tree}, \term{Juice}, and \term{Banana}. 
  More contextual cues yield more precise interpretations, highlighting the context-dependent nature of semantics in embedding spaces.
  \hfill $\triangle$ \par 
\end{example}

\paragraph{Semantic Fields in Embedding Spaces}
To model context-dependent semantics, we define \textbf{Semantic Fields} as sets of neighboring vectors that contextualize a target embedding vector.
For instance, as shown in Example~\ref{exp:apple}, the meaning of \term{Apple} becomes clearer when surrounded by neighbors like \term{Mac}, \term{IBM}, and \term{Windows}, forming a Semantic Field of \term{Apple}.
This concept aligns with foundational embedding models such as Word2Vec~\cite{Mikolov2013Word2Vec} and BERT~\cite{devlin2019bert}, where semantics arise from context.
To formalize this, we distinguish general embedding vectors ($\Vec{v}$) from those representing real-world terms ($\Vec{v}_t$), where the latter excludes purely mathematical constructs (e.g., zero vectors).
\begin{definition}[Semantic Field]
\label{def:semantic_field}
  A set $\Set{F}$ of embedding vectors forms a Semantic Field of radius $\epsilon > 0$ if there exists a central vector $\Vec{v}_t \in \Set{F}$ such that for all $\Vec{u}_t \in \Set{F}$, we have $d_{sem}(\Vec{u}_t, \Vec{v}_t) < \epsilon$.
\end{definition}
While Definition~\ref{def:semantic_field} allows us to examine local structures in an embedding space, it is insufficient for understanding the global organization of semantics.

\paragraph{Research Objective: From Local to Global Semantics}
Our goal is to investigate how Semantic Fields collectively shape the global semantic structure of embedding spaces, uncovering how these local Semantic Fields relate, interact, and form coherent semantic hierarchies.
To this end, we propose \textbf{SAFARI}, a principled framework that detects and analyzes hierarchical semantic structures by identifying boundaries between Semantic Fields. 
This method offers an interpretable len for understanding how semantics are organized in high-dimensional embedding spaces.

\section{Methodology}
\label{sect:method}

\subsection{Semantic Field Representation}
\label{sect:method:motivation}

By Definition~\ref{def:semantic_field}, a Semantic Field is the neighborhood of a target embedding vector.
\textbf{SAFARI} identifies the structure of these neighborhoods.
However, representing such structures is non-trivial due to the following two key challenges:
\begin{itemize}
  \item \textbf{Handling Expression Variants:} 
  Closest neighbors are often populated by variants that are overly similar to the original, offering limited values into its interpretation~\cite{mimno2017strange, ethayarajh2019contextual}. 
  For example, word synonyms or images of the same object under different lighting conditions contribute little to understanding the underlying concept. 
  A robust representation should exhibit invariance to such variations.
  
  \item \textbf{Delineating Semantic Field Boundaries:}
  As a Semantic Field expands, it includes more embedding vectors in larger neighborhoods and thus provides richer context for more refined interpretations of the target embedding.
  However, since semantics naturally form hierarchies, there exists a boundary beyond which the Semantic Field contains embeddings diverse enough for it to transcend from a concrete concept to a more abstract one.
  It is necessary to determine such boundaries to tell when the Semantic Field starts to represent a new concept.
\end{itemize}

As illustrated in Figure \ref{fig:close_neighborhood}, starting from \term{Coca-Cola}, the nearest neighbor is \term{Coke} (a lexical variant), while broader semantic context only occurs with terms like \term{Sprite} and \term{Pepsi}, underscoring the difficulty of managing expression variants and identifying natural semantic boundaries.

\begin{figure}[ht]
  \centering
  \includegraphics[width=0.6\columnwidth]{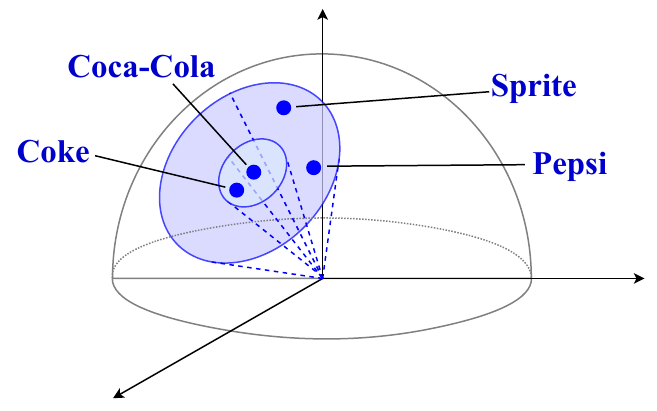}
  \caption{Illustration of Semantic Field exploration.}
  \label{fig:close_neighborhood}
\end{figure}

\paragraph{Geometric Representation: Semantic Field Subspace (SFS)} 
To address these challenges, we introduce the Semantic Field Subspaces (SFS), a low-dimensional subspace spanned by semantically related vectors. 
This geometric representation naturally absorbs expression variants via linear dependence and provides a compact, geometry-preserving abstraction of semantic content.

\begin{definition}[Semantic Field Subspace (SFS)]
\label{def:semantic_field_subspace}
  Let $\Set{F} = \{\Vec{v}_{1}, \cdots, \Vec{v}_{n}\}$ be a Semantic Field. Its SFS is defined as: $\Space{S}_{\Set{F}} = \text{span}(\Set{F}) = \{ \textstyle \sum_{i=1}^n \alpha_i \Vec{v}_{i} \mid \alpha_i \in \Space{R} \}$.
\end{definition}

We compute the basis of $\Space{S}_{\Set{F}}$ via SVD on matrix $\Mat{M}_{\Set{F}} = [\Vec{v}_1, \cdots, \Vec{v}_n]$, i.e., $\Mat{M}_{\Set{F}} = \Mat{U} \Mat{\Sigma} \Mat{V}^{\top}$.
This yields a continuous, low-rank representation that captures semantic structure while remaining invariant to redundancy.

\begin{figure*}[t]
  \centering
  \includegraphics[width=0.99\textwidth]{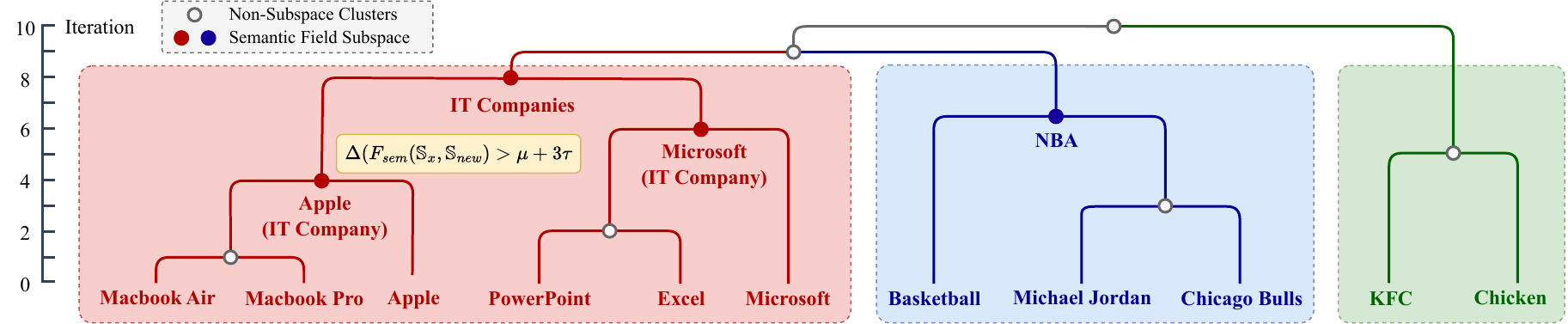}
  \caption{Toy example illustrating \textbf{SAFARI}'s hierarchical clustering process.}
  \label{fig:safari_example}
\end{figure*}

\paragraph{SFS Boundary Delineation via Semantic Shift}
To delineate boundaries between Semantic Fields, we leverage the hierarchical nature of language semantics, where expanding fields lead to broader, more abstract concepts.
\begin{proposition}[Hierarchical Semantic Structure]
\label{prop:hierarchy}
  Semantic hierarchies in natural language are reflected in the geometric structure of embedding spaces.
\end{proposition}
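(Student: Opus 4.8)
The plan is to argue this proposition not by a purely formal derivation but as a structural consequence of two ingredients: the distributional principles on which the embedding map $h$ is built, and an elementary monotonicity property of the SFS construction in Definition~\ref{def:semantic_field_subspace}. I would organize the argument in three steps, the first carrying the semantic weight and the latter two being essentially bookkeeping.

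First, I would recall the linguistic premise already invoked through semantic field theory and componential analysis~\cite{ullmann1957principles, nida2015componential}: a hierarchical relation such as hyponymy (\term{Fuji} $\prec$ \term{Apple} $\prec$ \term{Fruit}) corresponds to inclusion of contextual distributions --- the contexts licensing a specific term are, to first order, a subset of those licensing its hypernym. Since $h$ in models such as Word2Vec~\cite{Mikolov2013Word2Vec} and BERT~\cite{devlin2019bert} is trained to encode exactly these context distributions, nested contexts translate into nested neighborhoods in $\Space{R}^d$: near-synonyms and narrow co-hyponyms form a tight cluster, and that cluster lies inside the broader neighborhood of the associated abstraction. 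This is the hard part, because ``semantic hierarchy'' has no canonical mathematical definition; I would therefore state it explicitly as a modeling hypothesis (paralleling Proposition~\ref{prop:word_context}) and point to the experimental sections, where \textbf{SAFARI} recovers known taxonomies, as its empirical warrant rather than claiming a first-principles proof.

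Second, I would make the geometric half precise and elementary. Take a chain of Semantic Fields $\Set{F}_1 \subseteq \Set{F}_2 \subseteq \cdots \subseteq \Set{F}_k$ obtained by growing the radius $\epsilon$ of Definition~\ref{def:semantic_field} around a fixed central vector $\Vec{v}_t$. By monotonicity of linear span, $\Space{S}_{\Set{F}_1} \subseteq \Space{S}_{\Set{F}_2} \subseteq \cdots \subseteq \Space{S}_{\Set{F}_k}$, so the SFSes form a flag of subspaces with non-decreasing dimension. Combining with Step~1: while $\epsilon$ only admits expression variants the added vectors are linearly dependent and $\dim \Space{S}_{\Set{F}}$ is stable (Section~\ref{sect:method:motivation}); once $\epsilon$ crosses a hyponymy boundary, genuinely new directions enter and $\dim \Space{S}_{\Set{F}}$ jumps. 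Hence the discrete, nested, level-structured character of a semantic hierarchy is mirrored by a nested, level-structured family of subspaces --- which is precisely what ``reflected in the geometric structure'' should mean here.

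Third, I would tie this flag picture to the rest of the methodology: the dimension jumps described above are exactly the places where the Semantic Shift metric will spike, so the proposition is what licenses using Semantic Shift to detect SFS boundaries and, recursively, to reconstruct the hierarchy. I expect the formal content to stay light --- monotonicity of span plus invariance under linear dependence --- with the substantive commitment concentrated entirely in the linguistic hypothesis of Step~1.
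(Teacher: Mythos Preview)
Your instinct in Step~1 is exactly right, and in fact it matches the paper more closely than you may realize: the paper offers \emph{no proof whatsoever} for Proposition~\ref{prop:hierarchy}. It is stated as a foundational claim, in the same spirit as Proposition~\ref{prop:word_context}, and the text moves immediately to Definition~\ref{def:semantic_shift} without any supporting argument. The only justification the paper gives is implicit --- the linguistic citations earlier in Section~\ref{sect:problem} and the downstream experiments in Section~\ref{sect:expt:hierarchical_structure} and Section~\ref{sect:expt:case_study}. So your decision to treat it as a modeling hypothesis with empirical warrant is not a concession; it is precisely what the paper does.

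Where you diverge is that you go further than the paper: your Steps~2 and~3 (the flag of SFSes via monotonicity of span, the link between dimension jumps and Semantic Shift spikes) constitute an actual argument sketch that the paper never attempts. This is a genuine addition, not a different route to the same destination. The flag picture is sound as linear algebra and does make the phrase ``reflected in the geometric structure'' more concrete than the paper ever does. If you want to align with the paper, you can drop Steps~2 and~3 entirely and simply state the proposition as an assumption; if you want to strengthen the exposition beyond the paper, keep them, but be explicit that they are motivation for the Semantic Shift construction rather than a proof of the proposition, since the hard step (context inclusion $\Rightarrow$ neighborhood nesting under $h$) remains an empirical hypothesis either way.
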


As a Semantic Field expands, a significant shift in meaning often indicates an evolution to a new field. 
We quantify this evolution via the \textbf{Semantic Shift} between two subspaces $\Space{S}_{\Set{F}_x}$ and $\Space{S}_{\Set{F}_{new}}$:
\begin{definition}[Semantic Shift]
\label{def:semantic_shift}
  The Semantic Shift between $\Space{S}_{\Set{F}_x}$ and $\Space{S}_{\Set{F}_{new}}$ is defined as:
  \begin{equation} \label{eqn:semantic_shift}
    \Delta F_{sem}(\Space{S}_{\Set{F}_x}, \Space{S}_{\Set{F}_{new}}) = \textstyle \sum_{i} \Delta \sigma_i \cdot d_{sem}(\Vec{v}_i, \tilde{\Vec{v}}_i^*),
  \end{equation}
  where $\Delta \sigma_i = |\sigma_i - \tilde{\sigma}_i|$ captures the \textbf{dimensional importance shift} in singular values $\sigma_i \in \Mat{\Sigma}_x$ and $\tilde{\sigma}_i \in \Mat{\Sigma}_{new}$; 
  $d_{sem}(\Vec{v}_i, \tilde{\Vec{v}}_i^*)$ captures \textbf{directional change} between basis vectors $\Vec{v}_i \in \Mat{V}^{\top}_x$ and their nearest counterparts $\tilde{\Vec{v}}_i^* \in \Mat{V}^{\top}_{new}$.
\end{definition}

Semantic Shift acts as a boundary criterion:
A large shift suggests that the new subspace $\Space{S}_{\Set{F}_{new}}$ represents a more abstract concept that subsumes $\Space{S}_{\Set{F}_x}$, whereas small values reflect refinements within the same semantic field.

\subsection{The SAFARI Algorithm}
\label{sect:method:safari}

\paragraph{Algorithm Overview}
Building on Definitions \ref{def:semantic_field_subspace} and \ref{def:semantic_shift}, we propose \textbf{SAFARI}, an algorithm to uncover SFSes by monitoring Semantic Shifts during iterative clustering.
At each step, \textbf{SAFARI} merges the nearest clusters, resulting in a new subspace. 
The algorithm then evaluates the Semantic Shift between the new subspace and the previous subspace and checks whether such a shift is significant. 
If so, the new subspace is identified as a new SFS that subsumes the previous subspaces.

\paragraph{Detailed Procedure}
The pseudo-code is provided in Algorithm~\ref{alg:safari}. 
\textbf{SAFARI} initializes by assigning each vector as a singleton cluster in a set $\Omega$, and maintains a set $\Phi$ to store the discovered SFSes.
It proceeds iteratively with the steps below until only one cluster remains ($|\Omega| \leq 1$):
\begin{itemize}
  \item \textbf{Step 1: Cluster Merging.}
  The two nearest clusters $\Set{C}_x$ and $\Set{C}_y$ are identified using Semantic Distance $d_{sem}(\Set{C}_x,\Set{C}_y)$, with centroids representing each cluster.
  They are then merged into a new cluster $\Set{C}_{new}$, after which $\Set{C}_x$ and $\Set{C}_y$ are removed, and $\Set{C}_{new}$ is added to $\Omega$.

  \item \textbf{Step 2: SFS Delineation.}
  \textbf{SAFARI} constructs the subspaces $\Space{S}_{new}$ and $\Space{S}_x$ for $\Set{C}_{new}$ and the larger cluster $\Set{C}_x$, and computes Semantic Shift $\Delta F_{sem}(\Space{S}_x, \Space{S}_{new})$. 
  A sliding window of size $w$ tracks the recent $w$ values, computing mean $\mu$ and standard deviation $\tau$. 
  If the current $\Delta F_{sem}(\Space{S}_x, \Space{S}_{new})$ exceeds the dynamic threshold ($\mu + 3\tau$), $\Space{S}_{new}$ is added to $\Phi$ as a new SFS.
\end{itemize}

\begin{algorithm}[t]
\small
\caption{\textbf{SAFARI}}
\label{alg:safari}
\KwIn{Embedding set $\Set{E} \subset \Space{R}^d$, window size $w$;}
\KwOut{Set $\Phi$ of Semantic Field Subspaces (SFSes);}
$\Omega \gets$~Initialize each $\Vec{v}_t \in \Set{E}$ as its own cluster\; 
$\mu \gets 0$; $\tau \gets 0$; $\Phi \gets \varnothing$\;
\While{$\Size{\Omega} > 1$}{
  \Comment{Step 1:~Cluster Merging}
  $\{\Set{C}_x, \Set{C}_y\} \gets \argmin_{\Set{C}_i, \Set{C}_j \in \Omega} d_{sem}(\Set{C}_i, \Set{C}_j)$\;
  $\Set{C}_{new} \gets \Set{C}_x \cup \Set{C}_y$\;
  $\Omega \gets \Omega \cup \{\Set{C}_{new}\} \setminus \{\Set{C}_x, \Set{C}_y\}$\;
  
  \BlankLine
  \Comment{Step 2:~SFS Delineation}
  $\Set{C}_x \gets \Size{\Set{C}_x} > \Size{\Set{C}_y}~?~\Set{C}_x~:~\Set{C}_y$\;
  $\Space{S}_{x}, \Space{S}_{new} \gets \text{span}(\Set{C}_{x}), \text{span}(\Set{C}_{new})$\;
  Compute $\Delta F_{sem}(\Space{S}_{x}, \Space{S}_{new})$ using Eq.~\ref{eqn:semantic_shift}\;
  \If{$\Delta F_{sem}(\Space{S}_{x}, \Space{S}_{new}) > \mu + 3 \tau$}{
    $\Phi \gets \Phi \cup \{\Space{S}_{new}\}$\; 
  }
  Update $\mu$ and $\tau$ using the last $w$ values of $\Delta F_{sem}$\; \label{alg:safari:update-params}
} \label{alg:safari:iter_end}
\Return $\Phi$\;
\end{algorithm}
\setlength{\textfloatsep}{1.0em}

\paragraph{Remarks on Design}
The use of a sliding window ($w$) and the dynamic threshold ($\mu + 3\tau$), which is obtained via parameter study, is essential to accurately identify SFSes because we empirically observe that the baseline values of Semantic Shifts grow gradually as the algorithm proceeds.
The dynamic threshold allows \textbf{SAFARI} to adapt to such gradual change and effectively detect local spikes of real significance. 
Moreover, \textbf{SAFARI} adopts the hierarchical process to reflect the layered nature of semantic relationships:
(1) It dynamically determines SFSes based on semantic structures, avoiding relying on pre-defined cluster counts; 
(2) It reveals natural semantic hierarchies through the resulting dendrogram.

\begin{example}
\label{exp:safari}
Consider a toy dataset of 11 textual terms. 
Figure \ref{fig:safari_example} shows how \textbf{SAFARI} identifies SFSes.
In the first three iterations, semantically close pairs (e.g., \term{Macbook Air} \& \term{Macbook Pro}, \term{PowerPoint} \& \term{Excel}, and \term{Michael Jordan} \& \term{Chicago Bulls}) are merged without forming SFSes.
In the 4th iteration, merging \term{Apple} with the \term{Macbook} cluster triggers a significant Semantic Shift, forming a new SFS.
By the 8th iteration, a hierarchical structure emerges with an \term{IT Companies} subspace encompassing nested SFSes for \term{Apple (IT Company)} and \term{Microsoft (IT Company)}.
In the 9th iteration, the dynamic threshold prevents the merge between \term{IT Companies} and \term{NBA}, preserving semantic boundaries.
\hfill $\triangle$ \par
\end{example}

\subsection{Efficient Approximation of Semantic Shift}
\label{sect:method:approximation}

Each iteration of \textbf{SAFARI} requires computing the Semantic Shift via full SVD on matrices of size $n \times d$ ($d \leq n$), incurring a time complexity of $O(nd^2)$ \cite{halko2009finding, trefethen2022numerical}. 
This becomes a major bottleneck in large-scale applications.
To improve scalability, we propose a practical approximation. 
Let $\Mat{A}_x$ and $\Mat{A}_y$ be the matrices of a large cluster $\Set{C}_x$ and a small one $\Set{C}_y$. 
Instead of computing full SVDs, we approximate the Semantic Shift between $\Space{S}_{x}$ and $\Space{S}_{new}$ as:
\begin{equation} \label{eqn:appro_semantic_shift}
  \Delta \tilde{F}_{sem}(\Space{S}_{x}, \Space{S}_{new}) = \Norm{\Mat{A}_{y}}_2 \sigma_{max}(\Mat{A}_x),
\end{equation}
where $\Norm{\Mat{A}_{y}}_2$ is the spectral norm of $\Mat{A}_y$, and $\sigma_{max}(\Mat{A}_x)$ is the largest singular value of $\Mat{A}_x$.
This yields substantial speedups with negligible loss of accuracy (see Section \ref{sect:expt:efficiency}).

\paragraph{Theoretical Justification}
Let $\Mat{A}_{new} = [\Mat{A}_x | \Mat{A}_y]$ be the matrix representing the newly merged cluster $\Set{C}_{new}$. 
We now justify the approximation by establishing two key theoretical results:
\begin{itemize}
  \item An upper bound on the dimensional importance shift in singular values;
  \item A connection between directional change and the largest singular value of $\Mat{A}_{x}$.
\end{itemize}

\paragraph{Bounding Dimensional Importance Shift}
We begin with the following result:
\begin{theorem}[Bound on Dimensional Importance Shift]
\label{thm:aug_form}
  Given matrices $\Mat{A}_x$ and $\Mat{A}_y$ with the same number of columns and assuming $\Mat{A}_x$ has more rows than $\Mat{A}_y$, the shift in the $i$-th singular value satisfies:
  \begin{displaymath}
    \Delta \sigma_i = |\sigma_i(\Mat{A}_x)-\sigma_i(\Mat{A}_{new})| \leq \Norm{\Mat{A}_y}_2.
  \end{displaymath}
\end{theorem}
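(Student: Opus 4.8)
The plan is to reduce the statement to the standard Weyl/Mirsky perturbation bound for singular values, namely $|\sigma_i(\Mat{M}) - \sigma_i(\Mat{N})| \leq \Norm{\Mat{M} - \Mat{N}}_2$ for any two matrices $\Mat{M}, \Mat{N}$ of identical dimensions. All of the actual work is dimensional bookkeeping: massaging $\Mat{A}_x$ and $\Mat{A}_{new}$ into a common ambient size so the bound applies, and identifying the resulting perturbation with $\Mat{A}_y$. First I would fix the reading of the concatenation: since the hypothesis says $\Mat{A}_x$ and $\Mat{A}_y$ share their column count while $\Mat{A}_x$ has strictly more rows, $\Mat{A}_{new} = [\Mat{A}_x \mid \Mat{A}_y]$ must be interpreted as the row-wise stack $\begin{bmatrix}\Mat{A}_x \\ \Mat{A}_y\end{bmatrix}$; since singular values are transpose-invariant, this also covers the column-wise merging actually used inside \textbf{SAFARI}.

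Next, writing $\Mat{A}_x \in \Space{R}^{m_x \times d}$ and $\Mat{A}_y \in \Space{R}^{m_y \times d}$, let $\Mat{0}$ be the $m_y \times d$ zero block and set $\Mat{N} = \begin{bmatrix}\Mat{A}_x \\ \Mat{0}\end{bmatrix}$. Appending zero rows changes neither the Gram matrix $\Mat{N}^\top \Mat{N} = \Mat{A}_x^\top \Mat{A}_x$ nor, hence, any singular value, so $\sigma_i(\Mat{N}) = \sigma_i(\Mat{A}_x)$ for every $i$; by the same invariance, $\Norm{\,[\Mat{0}^\top \mid \Mat{A}_y^\top]^\top\,}_2 = \Norm{\Mat{A}_y}_2$. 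Now $\Mat{A}_{new}$ and $\Mat{N}$ are both $(m_x + m_y) \times d$ with difference $\Mat{A}_{new} - \Mat{N} = \begin{bmatrix}\Mat{0} \\ \Mat{A}_y\end{bmatrix}$, so the perturbation bound gives
\begin{displaymath}
  \Delta\sigma_i = |\sigma_i(\Mat{A}_x) - \sigma_i(\Mat{A}_{new})| = |\sigma_i(\Mat{N}) - \sigma_i(\Mat{A}_{new})| \leq \Norm{\Mat{A}_{new} - \Mat{N}}_2 = \Norm{\Mat{A}_y}_2,
\end{displaymath}
which is exactly the claim.

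If one prefers to avoid invoking the singular-value form of Weyl's inequality, an equivalent route is available through the Gram matrix: $\Mat{A}_{new}^\top \Mat{A}_{new} = \Mat{A}_x^\top \Mat{A}_x + \Mat{A}_y^\top \Mat{A}_y$, so Weyl's inequality for a Hermitian perturbation yields $|\sigma_i(\Mat{A}_{new})^2 - \sigma_i(\Mat{A}_x)^2| \leq \Norm{\Mat{A}_y}_2^2$; since adding the positive-semidefinite matrix $\Mat{A}_y^\top \Mat{A}_y$ only raises eigenvalues, both $\sigma_i(\Mat{A}_{new}) - \sigma_i(\Mat{A}_x)$ and $\sigma_i(\Mat{A}_{new}) + \sigma_i(\Mat{A}_x)$ are nonnegative, and $(\sigma_i(\Mat{A}_{new}) - \sigma_i(\Mat{A}_x))^2 \leq \sigma_i(\Mat{A}_{new})^2 - \sigma_i(\Mat{A}_x)^2$ closes the argument. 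There is no genuine obstacle here; the only point requiring care is the dimensional bookkeeping, i.e.\ reconciling the ``same columns, more rows'' hypothesis with the bar notation and ensuring the perturbation theorem is used in its equal-dimension form, which the zero-padding step secures. I would also remark that the bound is tight (take $\Mat{A}_x = \Mat{0}$, so $\Delta\sigma_1 = \Norm{\Mat{A}_y}_2$) and that summing over the singular values feeds directly into the approximation of Eq.~\ref{eqn:appro_semantic_shift}.
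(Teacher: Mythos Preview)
Your proof is correct and follows essentially the same approach as the paper: zero-pad one matrix to match dimensions, invoke the Weyl perturbation bound $|\sigma_i(\Mat{M})-\sigma_i(\Mat{M}+\Mat{E})|\le\Norm{\Mat{E}}_2$, and use that zero-padding preserves both the singular values of $\Mat{A}_x$ and the spectral norm of the $\Mat{A}_y$-block. Your explicit reconciliation of the row/column orientation and the alternative Gram-matrix route are nice extras, but the core argument is identical to the paper's.
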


\begin{proof}
The result follows from Weyl's Theorem \cite{weyl1912asymptotische}, which bounds the change in singular values under additive perturbations.
Consider the larger cluster represented by matrix $\Mat{A} \in \Space{R}^{m \times d}$.
When merging with another cluster, the resulting matrix $\Mat{\tilde{A}}$ can be viewed as a perturbed version of $\Mat{A}$, i.e., $\Mat{\tilde{A}} = \Mat{A} + \Mat{E}$, where $\Mat{E}$ is the perturbed matrix. 
\begin{theorem}[Weyl's Theorem \cite{weyl1912asymptotische}]
\label{thm:weyls}
  For any perturbed matrix $\Mat{E}$, the singular values satisfy: $|\sigma_i(\Mat{A})-\sigma_i(\Mat{\tilde{A}})| = |\sigma_i(\Mat{A})-\sigma_i(\Mat{A} + \Mat{E})| \leq \Norm{\bm{E}}_2$.
\end{theorem}

This result implies that the change in any singular value is at most the spectral norm of the perturbed matrix, regardless of its dimension \cite{stewart1998perturbation}.
To apply this, we rewrite $\Mat{A}_{new} = [\Mat{A}_x | \Mat{A}_y] = [\Mat{A}_x|\Mat{O}] + [\Mat{O}|\Mat{A}_y]$, where $\Mat{O}$ is a zero matrix. 
According to Theorem \ref{thm:weyls}, we have:
\begin{align*}
  |\sigma_i([\Mat{A}_x | \Mat{O}]) - \sigma_i(\Mat{A}_{new})| 
  &= |\sigma_i([\Mat{A}_x | \Mat{O}]) - \sigma_i([\Mat{A}_x | \Mat{A}_y])| \\
  & \leq \Norm{[\Mat{O}|\Mat{A}_y]}_2 = \Norm{\Mat{A}_y}_2.
\end{align*}
Since $\sigma_i([\Mat{A}_x | \Mat{O}]) = \sigma_i(\Mat{A}_x)$, Theorem \ref{thm:aug_form} is proved.
\end{proof}

\paragraph{Approximating Directional Change}
Intuitively, $\sum_{i} d_{sem}(\Vec{v}_i, \tVec{v}_i^*)$ captures the perturbation in basis directions during cluster merging.
Following \cite{meyer2000matrix, belsley2005regression}, it is expected that a higher sensitivity in $\Mat{A}_x$, typically characterized by its condition number $\kappa(\Mat{A}_x) = {\sigma_{max}(\Mat{A}_x)}/{\sigma_{min}(\Mat{A}_x)}$, will result in larger directional changes when $\Mat{A}_x$ and $\Mat{A}_y$ are merged, i.e., $\sum_{i} d_{sem}(\Vec{v}_i, \tVec{v}_i^*) = \mathcal{O}(\kappa(\Mat{A}_x))$.

Since $\Mat{A}_x$ and $\Mat{A}_y$ represent embedding vectors sampled similarly from the embedding space, the impact of noise on these vectors are similar. 
Therefore, their minimum singular values can be assumed comparable, i.e, $\sigma_{min}(\Mat{A}_x) \approx \sigma_{min}(\Mat{A}_y) \approx \sigma_{min}(\Mat{A}_{new})$.
This allows us to discard the effects of minimum singular values, and the directional change can be approximated by $\sigma_{max}(\Mat{A}_x)$, i.e., $\sigma_{max}(\Mat{A}_x) \approx \mathcal{O}(\kappa(\Mat{A}_x)) \approx \sum_{i} d_{sem}(\Vec{v}_i, \tVec{v}_i^*)$.

While this approximation is necessarily coarse, it offers an efficient alternative for real-time clustering. 
Empirical results (Section \ref{sect:expt:case_study}) confirm its efficacy with minimal impact on performance.

\section{Experiments}
\label{sect:expt}

\subsection{Datasets and Experiment Environment}
\label{sect:expt:dataset}

\paragraph{Datasets}
We evaluate \textbf{SAFARI} and \textbf{SFSes} on six public datasets across text and image modalities.
For the text modality, we use five diverse datasets: \textbf{AG-News} \cite{zhang2015character} (with 4 topic classes: Business, Sci/Tech, Sports, and World), \textbf{AAPD} \cite{yang2018sgm}, \textbf{IMDB} \cite{maas2011learning}, \textbf{Yelp},\footnote{\url{https://www.yelp.com/dataset}} and \textbf{NewsSpectrum} \cite{sun2024diversinews}.
For the image modality, we employ \textbf{MIT-States} \cite{isola2015discovering}, which enables the evaluation of object-attribute composition in visual embeddings.
Further dataset details are available in Appendix A.1.

\paragraph{Experiment Environment}
All methods were implemented in Python 3.8 and evaluated on a Ubuntu 20.04 machine with Intel\textsuperscript{\textregistered} Xeon\textsuperscript{\textregistered} Platinum 8480C and an NVIDIA H100 GPU.

\subsection{Hierarchical Semantic Structure Discovery} 
\label{sect:expt:hierarchical_structure}

\paragraph{Experimental Setup} 
We evaluate \textbf{SAFARI} and the resulting \textbf{SFSes} on two modality-diverse datasets: \textbf{AG-News} for text and \textbf{MIT-States} for images.
Embedding spaces are constructed using BLINK \cite{wu2020scalable} for AG-News and CLIP \cite{radford2021learning} for MIT-States.
To benchmark hierarchical discovery, we generate 4-level semantic label hierarchies (Lv0 to Lv3) for both datasets using Claude 3.7 Sonnet \cite{anthropic2025claude}, with details provided in Appendix A.2.
We assess how well the \textbf{SFSes} align with the reference hierarchy using the \textbf{impurity} metric:
\begin{displaymath}
  \text{Impurity} = \textstyle \frac{1}{l} \sum_{i=1}^l (1-\frac{1}{|L_i|} \max_{1 \leq j \leq c} |L_i \cap C_j|),
\end{displaymath}
where $L_i$ is label class $i$; $C_j$ is cluster $j$ used to construct \textbf{SFSes}; $l$ and $c$ are the number of label classes and clusters. 
Lower impurity indicates greater semantic coherence within clusters, with 0 denoting perfect label concentration within clusters.
As \textbf{SAFARI} progresses, it is expected that impurity grows due to the merging of semantically broader categories, with a consistent ordering across levels: Lv0 (most specific) $>$ Lv1 $>$ Lv2 $>$ Lv3 (most abstract).

\begin{figure}[ht]%
\centering%
\includegraphics[width=0.6\columnwidth]{./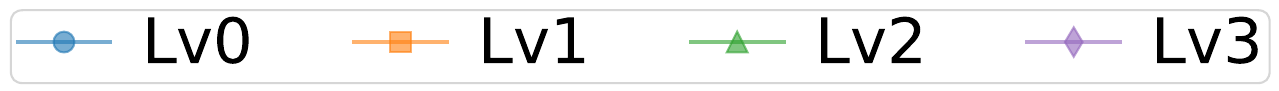} \\%
\subfigure[AG-News dataset.]{%
  \label{fig:hierarchy:agnews-diversity}%
  \includegraphics[width=0.48\columnwidth]{./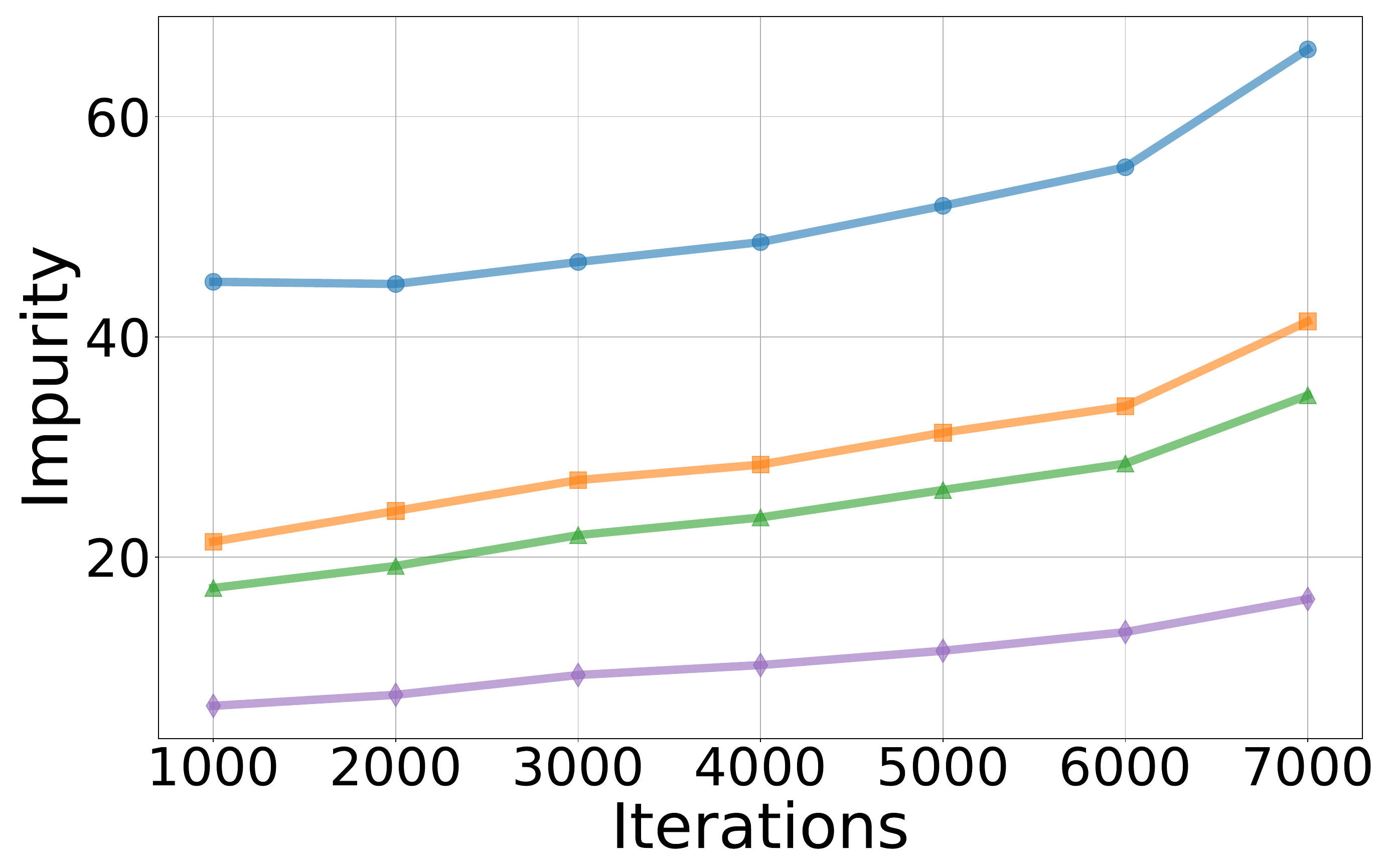}}%
\hfill%
\subfigure[MIT-States dataset.]{%
  \label{fig:hierarchy:mit-diversity}%
  \includegraphics[width=0.48\columnwidth]{./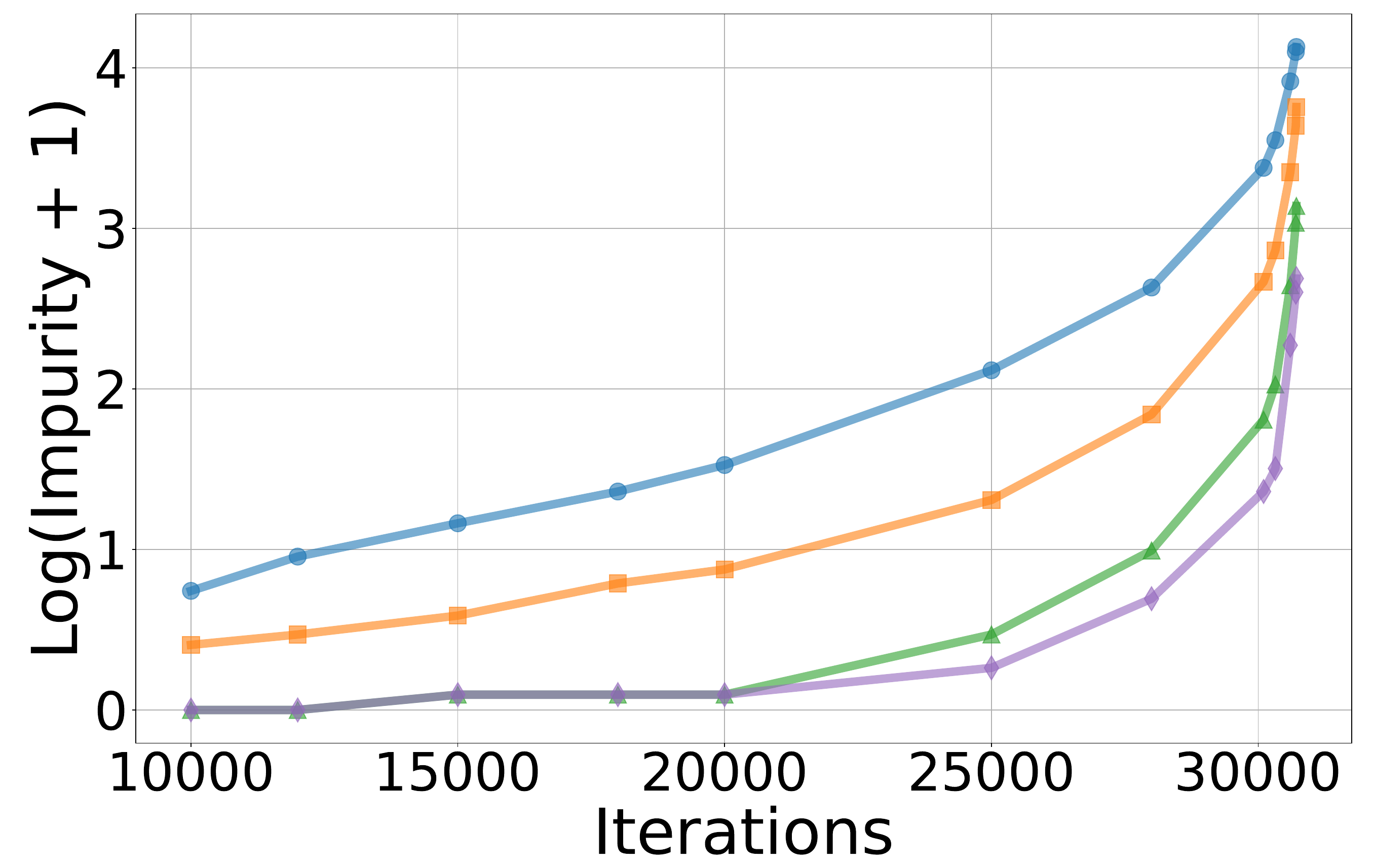}}%
\caption{Impurity across hierarchical levels.}%
\label{fig:hierarchy-diversification}%
\end{figure}%

\begin{figure*}[t]
  \centering
  \includegraphics[width=0.99\textwidth]{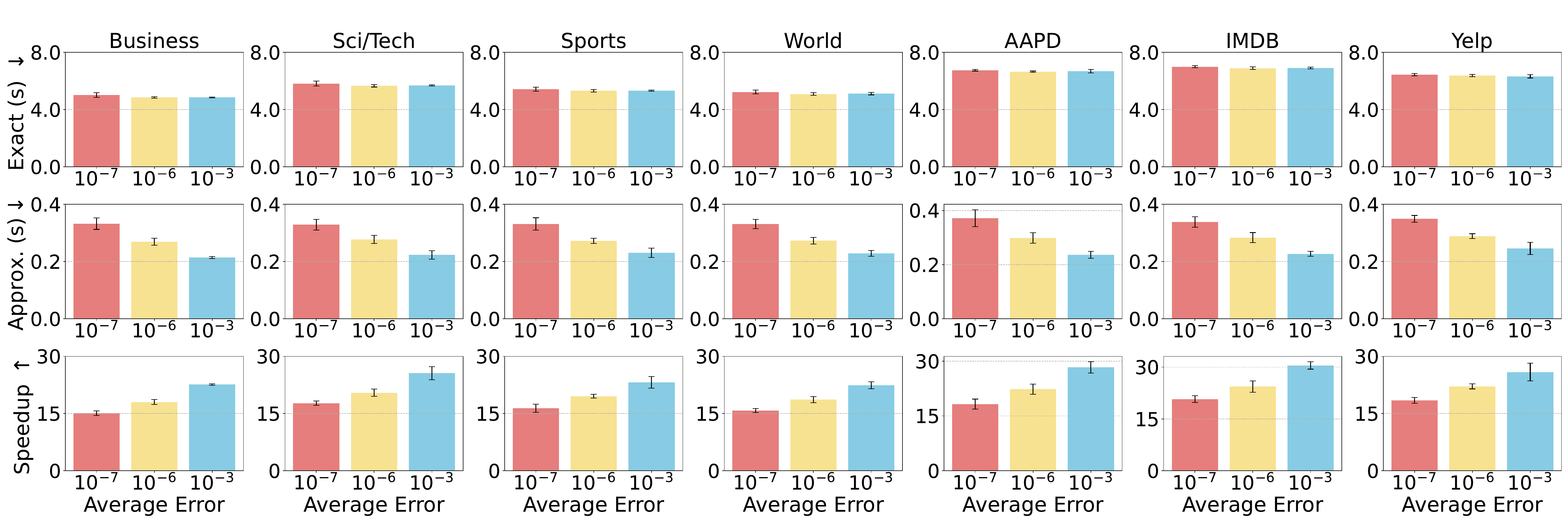}
  \caption{Runtime comparison between exact and approximate Semantic Shift computation across seven topic classes.}
  \label{fig:efficiency}
\end{figure*}

\begin{table*}[t]
\centering
\small
\resizebox{0.99\textwidth}{!}{%
\begin{tabular}{c cccc cccc}
  \toprule
  \multirow{2.5}{*}{\textbf{Method}} & \multicolumn{4}{c}{\textbf{Text Classification}} & \multicolumn{4}{c}{\textbf{Image Classification}} \\ \cmidrule(lr){2-5} \cmidrule(lr){6-9} 
  & \textbf{Precision (\%) $\uparrow$} & \textbf{Recall (\%) $\uparrow$} & \textbf{F1-score (\%) $\uparrow$} & \textbf{Time (s) $\downarrow$} & \textbf{Precision (\%) $\uparrow$} & \textbf{Recall (\%) $\uparrow$} & \textbf{F1-score (\%) $\uparrow$} & \textbf{Time (s) $\downarrow$} \\ 
  \midrule
  \textbf{SAFARI}                  
  & \textbf{48.3} & \textbf{49.3} & \textbf{48.5} & 46.37  & \underline{61.4} & \underline{61.1} & \underline{60.5} & \underline{18.64}    \\
  \textbf{SVM}                     
  & \underline{47.5} & \underline{47.8} & \underline{47.6} & 91.87  & \textbf{63.5} & \textbf{62.7} & \textbf{62.1} & 69.58    \\
  \textbf{KNN}                     
  & 41.9 & 43.1 & 42.1 & \textbf{1.167}   & 58.5 & 57.5 & 57.0 & \textbf{2.855}     \\
  \textbf{MLP}                     
  & 43.4 & 41.6 & 42.1 & 111.5   & 54.7 & 54.4 & 54.3 & 92.55  \\
  \textbf{RF}                      
  & 35.7 & 40.3 & 37.8  & \underline{35.64}  & 56.0 & 55.3 & 54.0 & 105.6  \\ 
  \bottomrule
\end{tabular}}
\caption{Classification results for text and image modalities. \textbf{Bold} and \underline{underlined} denote the best and second-best scores, respectively. \textbf{SAFARI} leads on text classification and ranks second on image classification, balancing accuracy and efficiency.}
\label{tab:classification_results}
\end{table*}
\setlength{\textfloatsep}{1.0em}

\paragraph{Results and Analysis}  
Figure \ref{fig:hierarchy-diversification} shows a consistent decrease in impurity from Lv0 to Lv3 across iterations for both modalities.
This trend reflects a hierarchical shift from specific to abstract semantics, confirming that \textbf{SFSes} capture coherent semantic groupings at multiple granularities.

The consistent patterns across text and image confirm the \textbf{modality-agnostic} nature of \textbf{SAFARI}.
Without supervision, it uncovers increasingly abstract semantic relationships by tracking Semantic Shifts.
By revealing how local neighborhoods compose global hierarchies, \textbf{SAFARI} offers a robust and generalizable framework for identifying hierarchical structures in embedding spaces, advancing our understanding of their inherent semantic organization.

\subsection{Classification Across Modalities}
\label{sect:expt:classification}

\paragraph{Experimental Setup}
We measure whether \textbf{SFSes} preserve meaningful semantics by testing their performance on classification tasks across text and image modalities.
For text classification, we use four datasets: \textbf{AG-News} (4 topics), \textbf{AAPD}, \textbf{IMDB}, and \textbf{Yelp}, covering seven distinct classes.
For image classification, we use \textbf{MIT-States}, filtering to 97 object classes with at least 240 samples each.
We compare \textbf{SAFARI} against four standard classifiers: Support Vector Machine (SVM) \cite{platt1999probabilistic, chang2011libsvm}, K-Nearest Neighbors (KNN) \cite{cover1967nearest, fix1985discriminatory}, Random Forest (RF) \cite{breiman2001random}, and Multi-Layer Perceptron (MLP) \cite{he2015delving, hinton1990connectionist}. 
For classification using \textbf{SFSes}, we compute the distance between each test embedding and all identified \textbf{SFSes}, assigning the label to the nearest one.
Results are presented in Table~\ref{tab:classification_results}.

\paragraph{Results and Analysis}
For text classification, \textbf{SAFARI} outperforms all baselines, surpassing SVM, the second-best, while using only around 50\% of the computation time.
In image classification, it ranks second to SVM in accuracy but runs 3.7$\times$ faster, demonstrating a strong accuracy-efficiency trade-off.
Overall, \textbf{SAFARI} delivers competitive performance across modalities with notable computational savings, confirming that \textbf{SFSes} offer effective and efficient semantic representations.

We further evaluate political bias detection on the \textbf{NewsSpectrum} dataset (details in Appendix B).
Results show that \textbf{SAFARI} successfully captures nuanced ideological distinctions, where standard classifiers often fail, particularly on underrepresented political leanings, highlighting its robustness in modeling subtle, real-world semantics beyond surface-level topics.

\subsection{Efficient Semantic Shift Approximation}
\label{sect:expt:efficiency}

\begin{figure*}[t]
  \centering
  \includegraphics[width=0.99\textwidth]{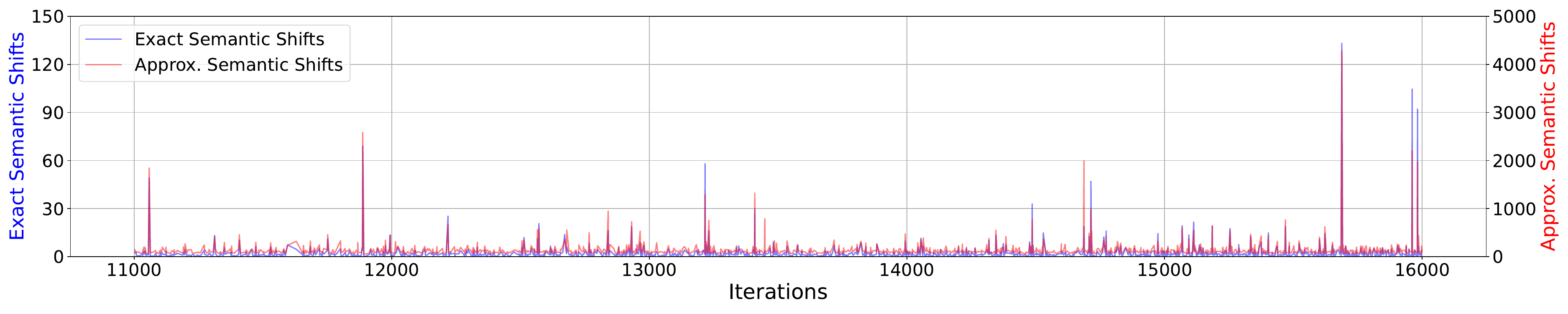}
  \caption{Semantic Shift (exact vs. approximate) on the Sports category in AG-News.} 
  \label{fig:exp2_iter}
\end{figure*}

\paragraph{Experimental Setup}
To evaluate the efficiency of \textbf{SAFARI}'s approximate Semantic Shift computation (Equation \ref{eqn:appro_semantic_shift}), we compare it against the full SVD-based method (Equation \ref{eqn:semantic_shift}), in terms of runtime. 
Following the setup in Section \ref{sect:expt:classification}, we sample the top 2,000 entities from each dataset and perform hierarchical clustering. Figure \ref{fig:efficiency} depicts the results, with runtime averaged over 10 independent runs.

\paragraph{Results and Analysis}
As shown in Figure \ref{fig:efficiency}, the approximate method achieves a 15$\sim$30$\times$ speedup over full SVD across all classes, with consistently low variance (as indicated by the error bars).
Despite the substantial acceleration, the average error between exact and approximate Semantic Shifts remains below 0.01, within the $10^{-3}$ scale, ensuring strong accuracy-efficiency trade-offs.
These results confirm that our approximation is a fast, stable, and reliable alternative to full SVD, making \textbf{SAFARI} scalable for large datasets.
A detailed analysis of the two Semantic Shift components, i.e., Dimensional Importance Shift and Directional Change, is provided in Appendix C.

\subsection{Case Study: Discovering Semantic Hierarchies in AG-News}
\label{sect:expt:case_study}

\paragraph{Experimental Setup}
To illustrate \textbf{SAFARI}'s ability to uncover hierarchical semantics, we conduct a case study on the \textbf{Sports} category of the \textbf{AG-News} dataset, chosen for its structured, event-driven content.
We apply \textbf{SAFARI} to the top 2,000 entity embeddings, computing Semantic Shifts at each iteration using both the exact (Equation \ref{eqn:semantic_shift}) and approximate (Equation \ref{eqn:appro_semantic_shift}) methods.
Figure \ref{fig:exp2_iter} plots Semantic Shift curves between iterations 11,000 and 16,000, with notable evolutions at iterations 11,352 and 15,856.
Figures~\ref{fig:case:foot_basket} and \ref{fig:case:general_team} visualizes the corresponding hierarchical groupings.

\begin{figure}[t]
  \centering
  \includegraphics[width=0.99\columnwidth]{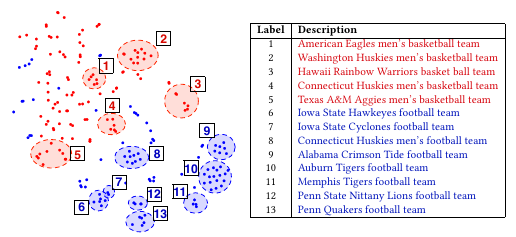}
  \caption{SFSes for USA basketball teams and USA football teams.}
  \label{fig:case:foot_basket}
\end{figure}

\begin{figure}[t]
  \centering
  \includegraphics[width=0.99\columnwidth]{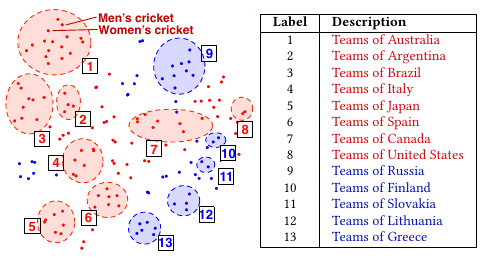}
  \caption{SFSes for sports teams from non-European and European countries.}
  \label{fig:case:general_team}
\end{figure}

\paragraph{Results and Analysis} 
Figure~\ref{fig:exp2_iter} shows that the approximate Semantic Shift curve closely mirrors the exact one, achieving a Pearson correlation of \textbf{0.92}.
This strong correlation confirms the reliability of our efficient approximate method introduced in Section~\ref{sect:method:approximation}.
Moreover, \textbf{SAFARI}'s sliding-window-based dynamic thresholding effectively adapts to the non-uniform fluctuation of shifts across iterations, enable the discovery of subtle yet meaningful SFSes that would be missed by static thresholds.
Further parameter study about the dynamic threshold mechanism in \textbf{SAFARI} is detailed in Appendix D.

\textbf{SAFARI} also reveals hierarchical semantic relationships with fine-to-coarse granularity.
In Figure \ref{fig:case:foot_basket}, early clusters capture specific U.S. university basketball and football teams, which gradually merge into broader categories like university sports teams.
Figure \ref{fig:case:general_team} illustrates cross-national grouping: teams initially cluster by country, then merge into regional groupings.
Notably, European teams (blue) consolidate more tightly than non-European teams (red), indicating structure-aware semantic abstraction.
These results showcase \textbf{SAFARI}'s ability to track evolving 
semantic organization, understanding latent hierarchies in embedding spaces.
Additional analysis is provided in Appendix E.

\section{Conclusions}
\label{sect:concl}

In this paper, we tackle the fundamental challenge of understanding the abstract and intricate structure of embedding spaces.
We introduce \textbf{SFSes} as a structured representation that explicitly links embedding spaces to their underlying semantics. 
Leveraging hierarchical clustering and the concept of Semantic Shift, we develop \textbf{SAFARI}, an effective algorithm that uncovers hierarchical semantic structures while maintaining computational scalability through an efficient approximation of Semantic Shift.
Through comprehensive experiments on six real-world datasets spanning text and image modalities, we show that \textbf{SFSes} improve performance on both standard classification tasks and subtle semantic challenges like political bias detection.
\textbf{SAFARI} consistently reveals meaningful, modality-agnostic semantic hierarchies with minimal computational overhead.
By bridging the gap between geometric embedding representations and their underlying semantics, this work opens new avenues for future research, like semantic-aware embedding analysis and knowledge discovery.



\bibliography{main.bib}
\makeatletter
\@ifundefined{isChecklistMainFile}{
  \newif\ifreproStandalone
  \reproStandalonetrue
}{
  \newif\ifreproStandalone
  \reproStandalonefalse
}
\makeatother

\ifreproStandalone
\documentclass[letterpaper]{article}
\usepackage[submission]{aaai2026}
\setlength{\pdfpagewidth}{8.5in}
\setlength{\pdfpageheight}{11in}
\usepackage{times}
\usepackage{helvet}
\usepackage{courier}
\usepackage{xcolor}
\frenchspacing

\begin{document}
\fi
\setlength{\leftmargini}{20pt}
\makeatletter\def\@listi{\leftmargin\leftmargini \topsep .5em \parsep .5em \itemsep .5em}
\def\@listii{\leftmargin\leftmarginii \labelwidth\leftmarginii \advance\labelwidth-\labelsep \topsep .4em \parsep .4em \itemsep .4em}
\def\@listiii{\leftmargin\leftmarginiii \labelwidth\leftmarginiii \advance\labelwidth-\labelsep \topsep .4em \parsep .4em \itemsep .4em}\makeatother

\setcounter{secnumdepth}{0}
\renewcommand\thesubsection{\arabic{subsection}}
\renewcommand\labelenumi{\thesubsection.\arabic{enumi}}

\newcounter{checksubsection}
\newcounter{checkitem}[checksubsection]

\newcommand{\checksubsection}[1]{%
  \refstepcounter{checksubsection}%
  \paragraph{\arabic{checksubsection}. #1}%
  \setcounter{checkitem}{0}%
}

\newcommand{\checkitem}{%
  \refstepcounter{checkitem}%
  \item[\arabic{checksubsection}.\arabic{checkitem}.]%
}
\newcommand{\question}[2]{\normalcolor\checkitem #1 #2 \color{blue}}
\newcommand{\ifyespoints}[1]{\makebox[0pt][l]{\hspace{-15pt}\normalcolor #1}}

\section*{Reproducibility Checklist}

\vspace{1em}
\hrule
\vspace{1em}

\textbf{Instructions for Authors:}

This document outlines key aspects for assessing reproducibility. Please provide your input by editing this \texttt{.tex} file directly.

For each question (that applies), replace the ``Type your response here'' text with your answer.

\vspace{1em}
\noindent
\textbf{Example:} If a question appears as

\begin{center}
\noindent
\begin{minipage}{.9\linewidth}
\ttfamily\raggedright
\string\question \{Proofs of all novel claims are included\} \{(yes/partial/no)\} \\
Type your response here
\end{minipage}
\end{center}
you would change it to:
\begin{center}
\noindent
\begin{minipage}{.9\linewidth}
\ttfamily\raggedright
\string\question \{Proofs of all novel claims are included\} \{(yes/partial/no)\} \\
yes
\end{minipage}
\end{center}

Please make sure to:
\begin{itemize}\setlength{\itemsep}{.1em}
\item Replace ONLY the ``Type your response here'' text and nothing else.
\item Use one of the options listed for that question (e.g., \textbf{yes}, \textbf{no}, \textbf{partial}, or \textbf{NA}).
\item \textbf{Not} modify any other part of the \texttt{\string\question} command or any other lines in this document.\\
\end{itemize}

You can \texttt{\string\input} this .tex file right before \texttt{\string\end\{document\}} of your main file or compile it as a stand-alone document. Check the instructions on your conference's website to see if you will be asked to provide this checklist with your paper or separately.

\vspace{1em}
\hrule
\vspace{1em}


\checksubsection{General Paper Structure}
\begin{itemize}

\question{Includes a conceptual outline and/or pseudocode description of AI methods introduced}{(yes/partial/no/NA)}
\textbf{yes}

\question{Clearly delineates statements that are opinions, hypothesis, and speculation from objective facts and results}{(yes/no)}
\textbf{yes}

\question{Provides well-marked pedagogical references for less-familiar readers to gain background necessary to replicate the paper}{(yes/no)}
\textbf{yes}

\end{itemize}
\checksubsection{Theoretical Contributions}
\begin{itemize}

\question{Does this paper make theoretical contributions?}{(yes/no)}
\textbf{yes}

	\ifyespoints{\vspace{1.2em}If yes, please address the following points:}
        \begin{itemize}
	
	\question{All assumptions and restrictions are stated clearly and formally}{(yes/partial/no)}
    \textbf{yes}

	\question{All novel claims are stated formally (e.g., in theorem statements)}{(yes/partial/no)}
    \textbf{yes}

	\question{Proofs of all novel claims are included}{(yes/partial/no)}
    \textbf{yes}

	\question{Proof sketches or intuitions are given for complex and/or novel results}{(yes/partial/no)}
    \textbf{yes}

	\question{Appropriate citations to theoretical tools used are given}{(yes/partial/no)}
    \textbf{yes}

	\question{All theoretical claims are demonstrated empirically to hold}{(yes/partial/no/NA)}
    \textbf{yes}

	\question{All experimental code used to eliminate or disprove claims is included}{(yes/no/NA)}
    \textbf{yes}
	
	\end{itemize}
\end{itemize}

\checksubsection{Dataset Usage}
\begin{itemize}

\question{Does this paper rely on one or more datasets?}{(yes/no)}
\textbf{yes}

\ifyespoints{If yes, please address the following points:}
\begin{itemize}

	\question{A motivation is given for why the experiments are conducted on the selected datasets}{(yes/partial/no/NA)}
	\textbf{yes}

	\question{All novel datasets introduced in this paper are included in a data appendix}{(yes/partial/no/NA)}
	\textbf{NA}

	\question{All novel datasets introduced in this paper will be made publicly available upon publication of the paper with a license that allows free usage for research purposes}{(yes/partial/no/NA)}
	\textbf{NA}

	\question{All datasets drawn from the existing literature (potentially including authors' own previously published work) are accompanied by appropriate citations}{(yes/no/NA)}
	\textbf{yes}

	\question{All datasets drawn from the existing literature (potentially including authors' own previously published work) are publicly available}{(yes/partial/no/NA)}
	\textbf{yes}

	\question{All datasets that are not publicly available are described in detail, with explanation why publicly available alternatives are not scientifically satisficing}{(yes/partial/no/NA)}
	\textbf{NA}

\end{itemize}
\end{itemize}

\checksubsection{Computational Experiments}
\begin{itemize}

\question{Does this paper include computational experiments?}{(yes/no)}
\textbf{yes}

\ifyespoints{If yes, please address the following points:}
\begin{itemize}

	\question{This paper states the number and range of values tried per (hyper-) parameter during development of the paper, along with the criterion used for selecting the final parameter setting}{(yes/partial/no/NA)}
	\textbf{yes}

	\question{Any code required for pre-processing data is included in the appendix}{(yes/partial/no)}
	\textbf{yes}

	\question{All source code required for conducting and analyzing the experiments is included in a code appendix}{(yes/partial/no)}
	\textbf{yes}

	\question{All source code required for conducting and analyzing the experiments will be made publicly available upon publication of the paper with a license that allows free usage for research purposes}{(yes/partial/no)}
	\textbf{yes}
        
	\question{All source code implementing new methods have comments detailing the implementation, with references to the paper where each step comes from}{(yes/partial/no)}
	\textbf{yes}

	\question{If an algorithm depends on randomness, then the method used for setting seeds is described in a way sufficient to allow replication of results}{(yes/partial/no/NA)}
	\textbf{NA}

	\question{This paper specifies the computing infrastructure used for running experiments (hardware and software), including GPU/CPU models; amount of memory; operating system; names and versions of relevant software libraries and frameworks}{(yes/partial/no)}
	\textbf{yes}

	\question{This paper formally describes evaluation metrics used and explains the motivation for choosing these metrics}{(yes/partial/no)}
	\textbf{yes}

	\question{This paper states the number of algorithm runs used to compute each reported result}{(yes/no)}
	\textbf{yes}

	\question{Analysis of experiments goes beyond single-dimensional summaries of performance (e.g., average; median) to include measures of variation, confidence, or other distributional information}{(yes/no)}
	\textbf{no}

	\question{The significance of any improvement or decrease in performance is judged using appropriate statistical tests (e.g., Wilcoxon signed-rank)}{(yes/partial/no)}
	\textbf{no}

	\question{This paper lists all final (hyper-)parameters used for each model/algorithm in the paper’s experiments}{(yes/partial/no/NA)}
	\textbf{yes}

\end{itemize}
\end{itemize}
\ifreproStandalone
\end{document}
\fi
\appendix

\setcounter{table}{0}
\renewcommand{\thetable}{\thesection\arabic{table}}
\setcounter{theorem}{0}
\renewcommand{\thetheorem}{\thesection\arabic{theorem}}
\setcounter{equation}{0}
\renewcommand{\theequation}{\thesection\arabic{equation}}
\setcounter{figure}{0}
\renewcommand{\thefigure}{\thesection\arabic{figure}}

\section{Experiment Details}
\label{appendix:expt-details}

\subsection{Dataset Details}
\label{appendix:expt-details:datasets}

We evaluate \textbf{SAFARI} on six widely used real-world datasets spanning both text and image modalities, demonstrating its effectiveness and generalizability across domains.
For the text modality, we use five diverse datasets:
\begin{itemize}
  \item \textbf{AG-News}~\cite{zhang2015character} consists of over 1 million articles from 2,000+ sources, categorized into four topics: \textbf{Business}, \textbf{Sci/Tech}, \textbf{Sports}, and \textbf{World}. 
  This serves as our primary dataset due to its scale and well-defined semantic structures.

  \item \textbf{AAPD}~\cite{yang2018sgm} contains 55,840 arXiv abstracts from computer science, each labeled with one or more subject areas, supporting multi-label classification tasks.
  
  \item \textbf{IMDB}~\cite{maas2011learning} comprises 50,000 movie reviews split evenly into training and testing sets, designed for binary sentiment classification.\footnote{\url{https://www.imdb.com/}}
  
  \item \textbf{Yelp} includes 6.99 million user reviews with 150,000+ business attributes, enabling fine-grained semantic analysis.\footnote{\url{https://www.yelp.com/dataset}}

  \item \textbf{NewsSpectrum}~\cite{sun2024diversinews} offers 250,000 politically diverse news articles from Reddit. It offers a balanced distribution across the ideological spectrum, making it well-suited for studying abstract semantic phenomena such as political bias.
\end{itemize}

For the image modality, we use \textbf{MIT-States} dataset~\citep{isola2015discovering}, which comprises $\sim$53,000 images labeled with 245 object classes and 115 attribute states. 
It is specifically designed to evaluate models on object-attribute compositions and compositional generalization.

\begin{figure}[t]
  \centering
  \includegraphics[width=0.99\columnwidth]{./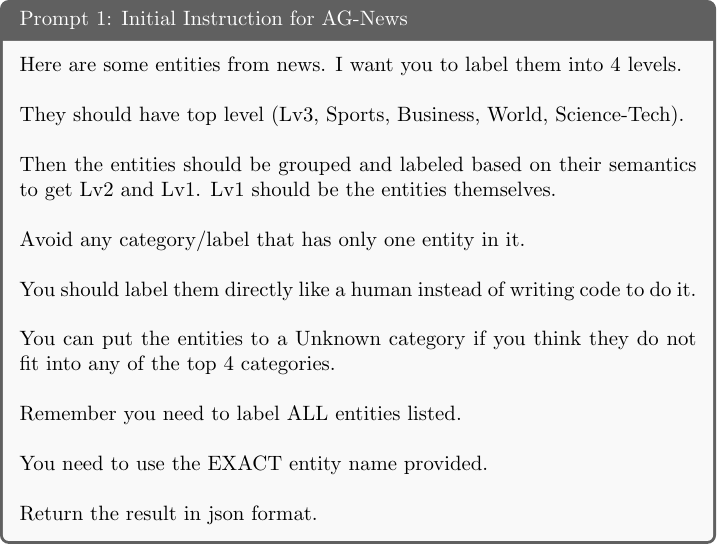}
  \caption{Initial prompt to generate a partial AG-News hierarchy with fixed top-level labels.} 
  \label{fig:prompts_ag_news_1}
\end{figure}

\begin{figure}[t]
  \centering
  \includegraphics[width=0.99\columnwidth]{./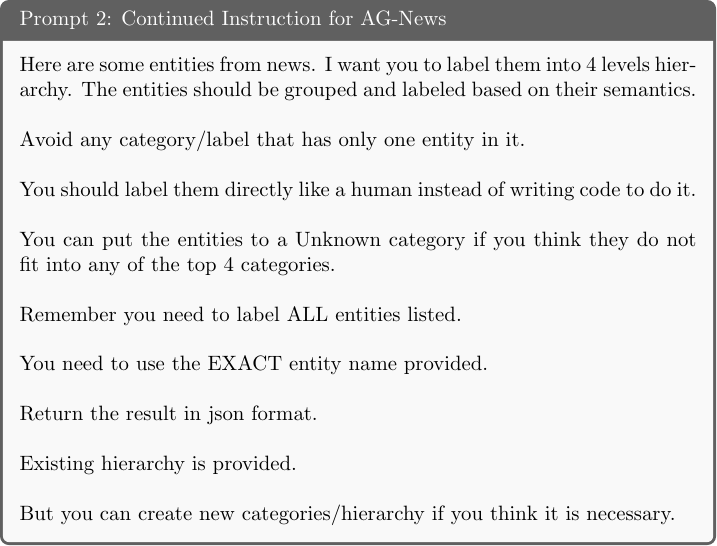}
  \caption{Second-stage prompt expanding the AG-News hierarchy with additional entity subsets.} 
  \label{fig:prompts_ag_news_2}
\end{figure}

\subsection{Hierarchical Label Structures For AG-News and MIT-States}
\label{appendix:expt-details:hierarchy}

To evaluate semantic coherence across multiple granularities, we construct four-level hierarchies for both datasets:
\begin{itemize}
  \item \textbf{AG-News:} Category $\rightarrow$ Subcategory $\rightarrow$ Semantic Group $\rightarrow$ Entity; 
  
  \item \textbf{MIT-States:} Category $\rightarrow$ Subcategory $\rightarrow$ Object $\rightarrow$ Attribute. 
\end{itemize}

These hierarchies capture increasingly specific relationships, ranging from broad domains to specific entities in \textbf{AG-News}, as well as conceptual relationships such as \term{Materials \& Substances $\rightarrow$ Metals $\rightarrow$ Steel $\rightarrow$ Unpainted} in \textbf{MIT-States}. 

\paragraph{Prompts for AG-News}
We employ a two-stage prompting strategy using Claude 3.7 Sonnet to construct the AG-News hierarchy.
Directly prompting with the full entity list often causes omission due to context length limitations.
To mitigate this, we begin with a small subset of entities and prompt Claude to generate a hierarchy with fixed top-level categories (Figure \ref{fig:prompts_ag_news_1}).
Then, we iteratively expand the hierarchy by combining previous outputs with new subsets of entities (Figure \ref{fig:prompts_ag_news_2}), until all entities are processed.

\begin{figure}[t]
  \centering
  \includegraphics[width=0.99\columnwidth]{./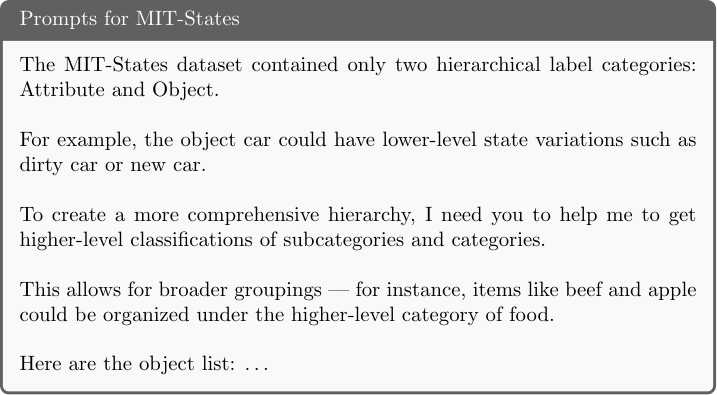}
  \caption{Prompt template for constructing MIT-States hierarchies via object-attribute refinement.} 
  \label{fig:prompts_mit_states}
\end{figure}

\paragraph{Prompts for MIT-States}
For MIT-States, we filter out images and categories that lack hierarchical depth, either due to having only one image (e.g., \term{dog}, \term{car}) or missing parent categories.
The prompt used to construct valid hierarchical relationships is shown in Figure \ref{fig:prompts_mit_states}.

\paragraph{Hierarchical Labels for AG-News and MIT-States}
The complete hierarchical label mappings for AG-News and MIT-States are provided in Tables \ref{tab:agnews-labels} and \ref{tab:mit-categories}, respectively.
These serve as ground truth for evaluating semantic coherence at different levels of abstraction.

\begin{table*}[t]
\small
\centering
\renewcommand{\arraystretch}{1.0}
\caption{Hierarchical labels for AG-News dataset.}
\label{tab:agnews-labels}
\resizebox{0.99\textwidth}{!}{
\begin{tabular}{p{2.0cm} p{3.5cm} p{10.5cm}}
  \toprule
  \textbf{Category} & \textbf{Subcategory} & \textbf{Semantic Group} \\
  \midrule
  \multirow{9}{*}{Sports} 
  & Olympic Sports & Teams \& Organizations, Events \& Competitions, Athletes, Venues \\
  & Basketball & Teams \& Seasons, Players \& Personnel, Venues, Organizations \& Events \\
  & American Football & Teams \& Seasons, Players \& Personnel, Venues \& Concepts \\
  & Baseball & Teams \& Seasons, Players \& Personnel, Events \& Concepts \\
  & Other Sports & Golf \& Tennis, Racing \& Motorsports, Combat Sports, Rugby \& Cricket, Horse Racing, Soccer \& Football, Swimming \& Water Sports, Other Sporting Events, Other Sports Personnel, Winter Sports, Teams \& Organizations, Players \& Personnel, Venues \& Events, Cricket \& Rugby, Other Sports Events, Other Sports \\
  & Soccer \& Football & Teams \& Organizations, Players \& Personnel, Venues \& Events \\
  \midrule
  
  \multirow{9}{*}{Business}
  & Financial Services & Banking, Investment \& Asset Management, Insurance \& Risk Management, Consulting \& Advisory \\
  & Corporations \& Industries & Manufacturing \& Industrial, Retail \& Consumer Goods, Automotive, Transportation \& Logistics, Energy \& Resources, Technology \& Telecommunications \\
  & Technology Companies & Software \& IT, Security \& Cybersecurity, Media \& Entertainment, Hardware \& Computing, Technology Services \\
  & Retail \& Consumer Goods & Retail Companies, Food \& Beverage, Marketing \& Advertising \\
  & Corporate Entities & Corporations \& Conglomerates, Executives \& Entrepreneurs \\
  \midrule

  \multirow{6}{*}{World}
  & Politics \& Government & Political Figures, Government Organizations, Political Events \& Issues, International Relations, Political Movements \\
  & Cultural \& Social & Literature \& Writers, Social Groups, Media \& Entertainment Figures, Arts \& Culture, Social Issues \\
  & Law \& Justice & Legal Cases \& Legislation, Legal Professionals, Crime \& Legal Issues \\
  & Military \& Security & Military Conflicts, Organizations, Personnel, Security \& Intelligence \\
  \midrule

  \multirow{5}{*}{Science-Tech}
  & Space \& Astronomy & Space Exploration, Astronomical Research, Astronomical Objects \\
  & Computing \& Technology & Software \& Development, Hardware \& Devices, Internet \& Telecom, Digital Media, IT Infrastructure \\
  & Medical \& Health & Medical Technology, Research, Healthcare Organizations \\
  & Environmental Science & Climate \& Earth Sciences, Environmental Events \\
  \bottomrule
\end{tabular}}
\end{table*}
\begin{table*}[ht]
\small
\centering
\renewcommand{\arraystretch}{1.0}
\caption{Hierarchical labels for the MIT-States dataset.}
\label{tab:mit-categories}
\begin{tabular}{p{3.3cm} p{3.7cm} p{9.5cm}} 
\toprule
\textbf{Category} & \textbf{Subcategory} & \textbf{Object} \\
\midrule
\multirow{3}{*}{Materials and Substances} 
& Metals & Aluminum, Brass, Bronze, Copper, Metal, Steel \\
& Natural Materials & Clay, Cotton, Fabric, Foam, Paper, Paste, Plastic, Silk, Velvet, Wool \\
& Earth Elements & Concrete, Dirt, Granite, Ground, Mud, Rock, Sand, Stone \\
\midrule
\multirow{3}{*}{Food and Consumables} 
& Proteins & Beef, Chicken, Fish, Meat, Salmon, Seafood \\
& Produce & Apple, Fruit, Pear, Tomato, Vegetable, Potato \\
& Prepared Foods & Bread, Cheese, Cookie, Eggs, Pie, Pizza, Soup \\
\midrule
\multirow{4}{*}{Built Environment} 
& Structures & Building, Castle, Church, House, Wall \\
& Spaces & Bathroom, Kitchen, Room \\
& Furniture and Fixtures & Cabinet, Chair, Lightbulb, Tile \\
& Transportation Infrastructure & Highway, Road, Street \\
\midrule
\multirow{5}{*}{Nature} 
& Bodies of Water & Lake, Pond, Pool \\
& Landforms & Canyon, Valley \\
& Flora & Forest, Plant, Redwood, Tree \\
& Sky Elements & Cloud, Sky \\
& Agricultural & Farm \\
\midrule
\multirow{2}{*}{Consumer Goods} 
& Clothing and Accessories & Bracelet, Clothes, Coat, Dress, Necklace, Pants, Ribbon, Ring, Shirt, Shorts \\
& Household Items & Bag, Blade, Bottle, Camera, Carpet, Clock, Glass, Knife, Rope, Toy \\
\bottomrule
\end{tabular}
\end{table*}

\begin{table*}[t]
\centering
\small
\renewcommand{\arraystretch}{1.0}
\setlength\tabcolsep{6pt}
\caption{Detailed experiment settings.}
\label{tab:expt-settings}
\resizebox{0.99\textwidth}{!}{
  \begin{tabular}{p{2.8cm} p{2.5cm} p{1.8cm} p{3.2cm} p{3.0cm} p{3.0cm}} %
  \toprule
  \textbf{Experiment} & \textbf{Datasets} & \textbf{Models / Embeddings} & \textbf{Parameters} & \textbf{Metrics} & \textbf{Baselines} \\ 
  \midrule
  Hierarchical Structure Discovery in Text & AG-News & BLINK & Dynamic thresholding with sliding window & Impurity & N/A \\ 
  \midrule
  Hierarchical Structure Discovery in Image & MIT-States (53K images) & CLIP & Dynamic thresholding with sliding window & Impurity & N/A \\
  \midrule
  Topic Classification & AG-News, AAPD, IMDB, Yelp & BLINK & Top-n entities & Precision, Recall, F1-score     & SVM, KNN, RF, MLP  \\
  \midrule
  Image Classification & MIT-States & CLIP & Top-n dimensions (5\%)       & Precision, Recall, F1-score     & SVM, KNN, RF, MLP  \\
  \midrule
  Computational Efficiency   & AG-News Sampled  & BLINK  & Dynamic thresholding with sliding window & Runtime, Average error             & SAFARI (Full SVD) \\
  \midrule
  Political Bias Detection & NewsSpectrum    & AnglE  & Top-n entities   & Runtime, F1-score & SVM, KNN, RF, MLP  \\ 
  \midrule
  Component Analysis of Semantic Shift & AG-News Sampled   & AnglE        & N/A  & DIS/DC ratio, Mean, Std, Median & N/A  \\ 
  \midrule
  Parameter Study  & AG-News Sampled & AnglE  & Min window: 50-200, Std mul: 0.5-3.0  & CV, P90/P10, Max/Min ratio      & N/A \\ 
\bottomrule
\end{tabular}}
\end{table*}

\subsection{Detailed Experiment Settings}
\label{appendix:expt-details:settings}

To comprehensively evaluate \textbf{SAFARI} and the effectiveness of \textbf{SFSes}, we conduct a series of experiments across diverse datasets and tasks, summarized in Table \ref{tab:expt-settings}.

\paragraph{Hierarchical Structure Discovery}
We apply \textbf{SAFARI} in a fully unsupervised setting across both text and image modalities.
For the \textbf{text} modality, we use the AG-News dataset with four categories. 
To reduce noise from common entities (e.g., \term{Reuters}) that appear across all categories, we retain only those unique to each class.
For the \textbf{image} modality, we use MIT-States, comprising 53,000 images across 245 object and 115 attribute classes. 
Four-level semantic hierarchies (Lv0 to Lv3) are generated for both datasets using Claude 3.7 Sonnet, as detailed in Appendix~\ref{appendix:expt-details:hierarchy}.

We evaluate the discovered semantic structures using the \emph{impurity} metric that measures label heterogeneity within clusters, ranging from 0 (perfect homogeneity) to higher values (increased mixing). 
Successful hierarchy discovery is indicated by both increasing diversity values with iterations and a consistent Lv0 $>$ Lv1 $>$ Lv2 $>$ Lv3 ordering that mirrors the progression from specific to abstract concepts.

\paragraph{Classification}
\textbf{SFSes} are constructed using class-labeled training data.
Test samples are classified by computing cosine distances to all \textbf{SFSes} and assigning the label of the nearest subspace.
For text classification, we use all embedding dimensions weighted by singular values; for image classification, we use only the top 5\% of dimensions without weighting.
This distinction reflects the higher variability and noise (e.g., background features) in image embeddings, which make full-dimensional or weighted comparisons less effective.
We report precision, recall, and F1 score, and compare against four standard baselines: Support Vector Machine (\textbf{SVM}), K-Nearest Neighbors (\textbf{KNN}), Random Forest (\textbf{RF}), and Multi-Layer Perceptron (\textbf{MLP}).
\begin{itemize}
  \item \textbf{Topic Classification:} We use four text datasets: AG-News, AAPD, IMDB, and Yelp. 
  Entity embeddings are derived using \textbf{BLINK} \citep{wu2020scalable} for entity linking, followed by TF-IDF-based filtering \citep{schutze2008introduction, leskovec2020mining} to retain the top 2,100 entities.

  \item \textbf{Image Classification:} We use the MIT-States dataset with \textbf{CLIP} \cite{radford2021learning} to extract image embeddings.
  We retain 97 object classes with at least 240 samples each to ensure class balance.
\end{itemize}

For both modalities, embeddings are split into 80\% training and 20\% testing.

\paragraph{Computational Efficiency}
We apply \textbf{SAFARI} to the sampled AG-News dataset
and compare the runtime of our approximate \textbf{Semantic Shift} computation against the full SVD-based variant. 
We report runtime and error rates to quantify the trade-off between efficiency and accuracy.

\paragraph{Political Bias Detection}
To assess \textbf{SAFARI}'s capability in capturing nuanced ideological distinctions beyond surface-level topics, we conduct a political bias detection task using the \textbf{NewsSpectrum} dataset. 
The dataset contains articles classified into five political categories (Left, Lean Left, Center, Lean Right, and Right), with labels derived from media source affiliations rather than individual content analysis.
Articles are embedded using the AnglE~\citep{li2024aoe} sentence transformer, and the model is evaluated in a supervised classification setup with 80\%/20\% train-test split. 
Evaluation metrics include F1 score and training time, and \textbf{SAFARI} is again compared with the same four baselines as the classification task.

\paragraph{Component Analysis of Semantic Shift}
To understand the internal mechanisms of \textbf{Semantic Shift}, we conduct an experiment to assess the effectiveness of its two components: Dimensional Importance Shift (DIS) and Directional Change (DC) on the sampled \textbf{AG-News} dataset. 
To quantify their relative contributions, we report their mean, median, and standard deviation values. 

\paragraph{Parameter Study}
Finally, we perform a parameter study on the \textbf{SAFARI}'s dynamic thresholding mechanism. 
We vary the minimum window size and test a range of standard deviation multipliers (0.5 to 3.0, with a step of 0.5). 
To evaluate the uniformity of detected semantic shifts, we report two metrics: Coefficient of Variation (CV) and Max/Min ratio.

\section{Political Bias Detection}
\label{appendix:political_bias}

Beyond conventional classification in Section 5.3, we explore the capability of \textbf{SAFARI} in detecting more abstract semantic patterns through political bias detection. 
Unlike topic classification, where semantic differences are often explicit and content-driven, political bias manifests in subtle linguistic choices and ideological framing that transcend specific topics. 
This presents a more challenging test for our method: \textbf{can SFS identify and represent these nuanced semantics that shape political orientation?} 

\paragraph{Experimental Setup}
We employ \textbf{NewsSpectrum} to assess whether SFSes capture abstract semantic patterns that reflect ideological perspectives.
Articles are categorized into five bias groups: Left, Lean Left, Center, Lean Right, and Right, though these labels are inherently imprecise as they are assigned based on media sources rather than content.
We use AnglE embeddings~\citep{li2024aoe}, which achieves state-of-the-art performance in the MTEB benchmark \citep{muennighoff2023mteb}. Since AnglE is a sentence-transformer model, these vectors represent articles rather than entities.

Following the setup in Section 5.3, we compare \textbf{SAFARI} against four standard classifiers: \textbf{SVM}, \textbf{KNN}, \textbf{RF}, and \textbf{MLP}, using 80\% of the data for training and 20\% for testing.
The results are displayed in Figures \ref{fig:pl_bias} and \ref{fig:pl_bias_all}.

\begin{figure}[t]
\centering
\subfigure[F1 Scores.]{
  \label{fig:pl_bias:f1}
  \includegraphics[width=0.48\columnwidth]{./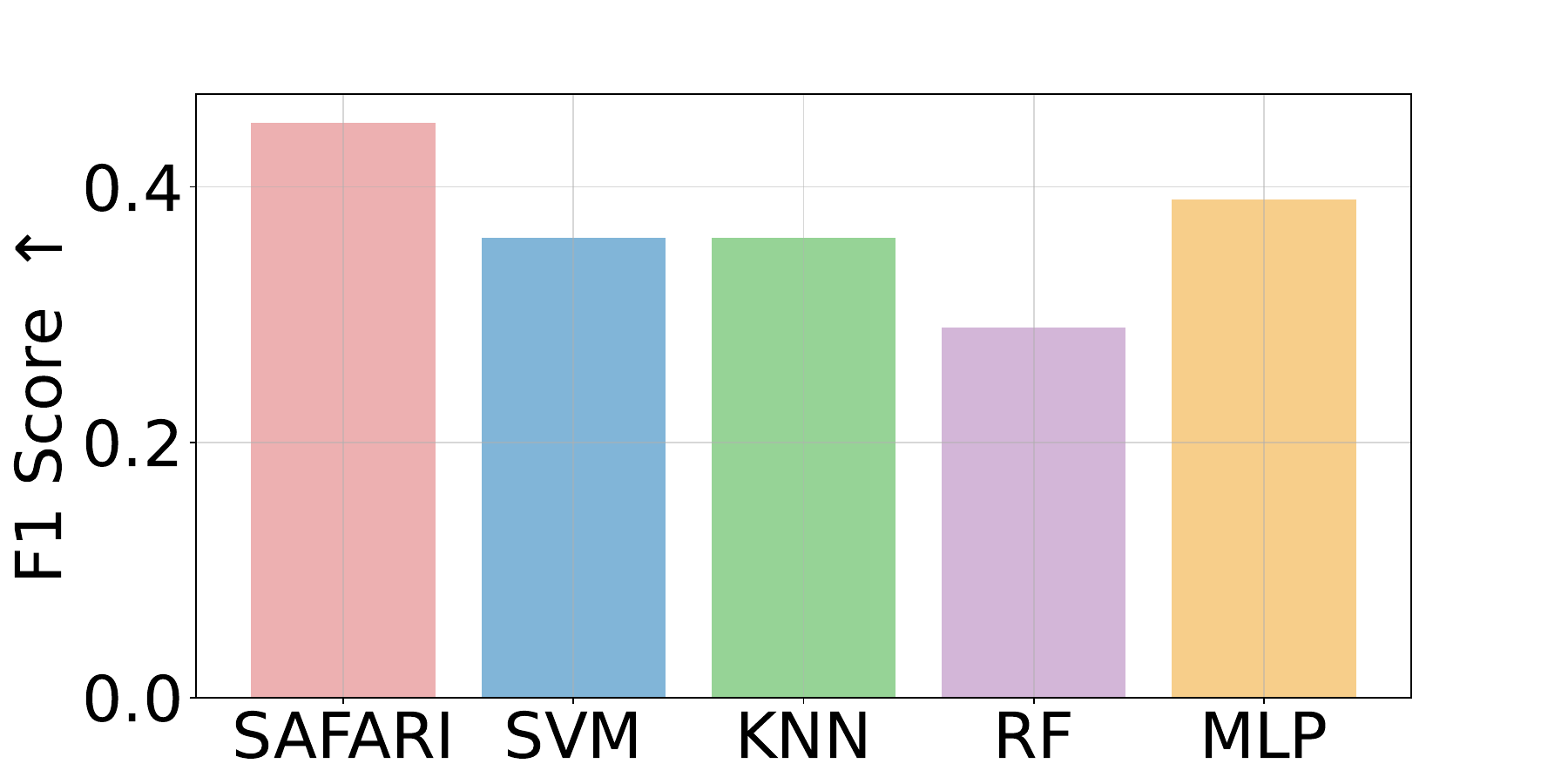}}
\subfigure[Training Time.]{
  \label{fig:pl_bias:time}
  \includegraphics[width=0.48\columnwidth]{./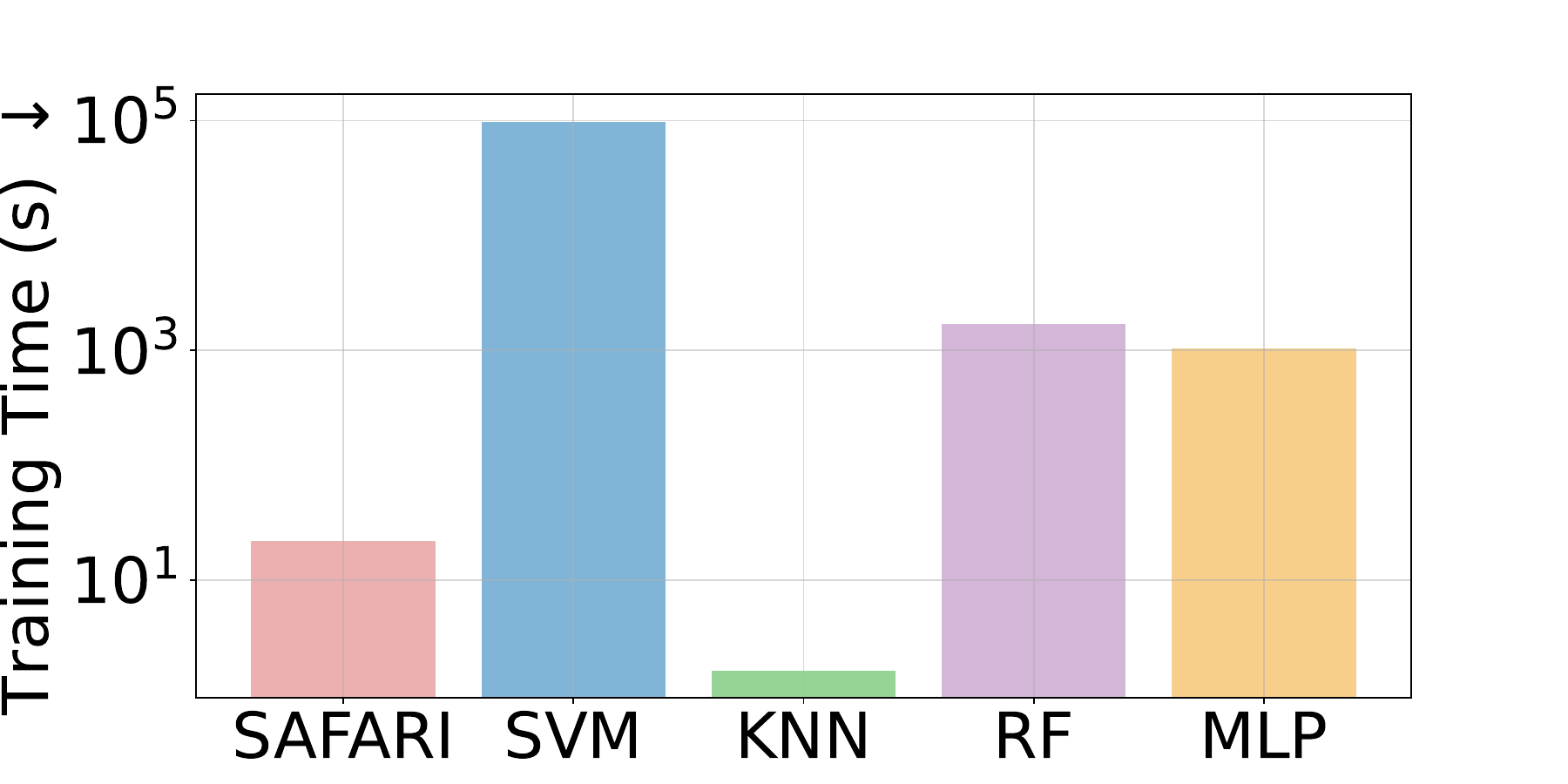}}
\caption{Political bias detection results.}
\label{fig:pl_bias}
\end{figure}

\begin{figure*}[t]
  \centering
  \includegraphics[width=0.99\textwidth]{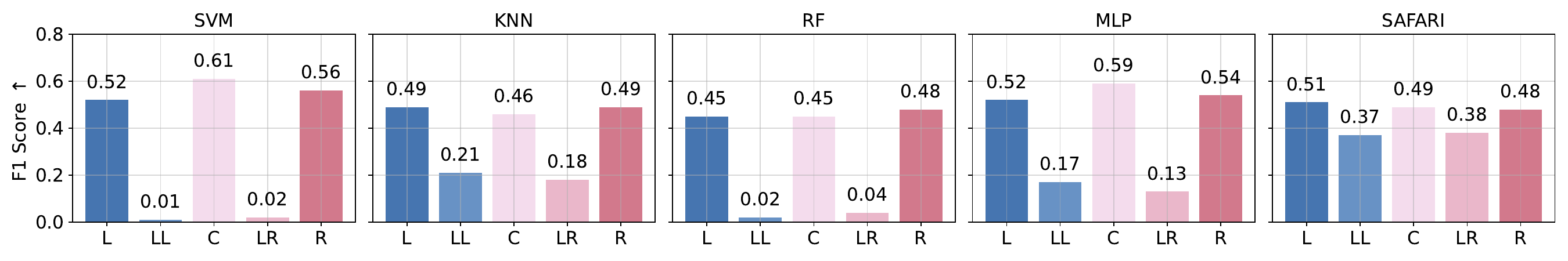}
  \caption{F1 scores across five political bias categories: L (Left), LL (Lean Left), C (Center), LR (Lean Right), and R (Right).}
  \label{fig:pl_bias_all}
\end{figure*}

\begin{figure*}[t]
  \centering
  \includegraphics[width=0.8\textwidth]{./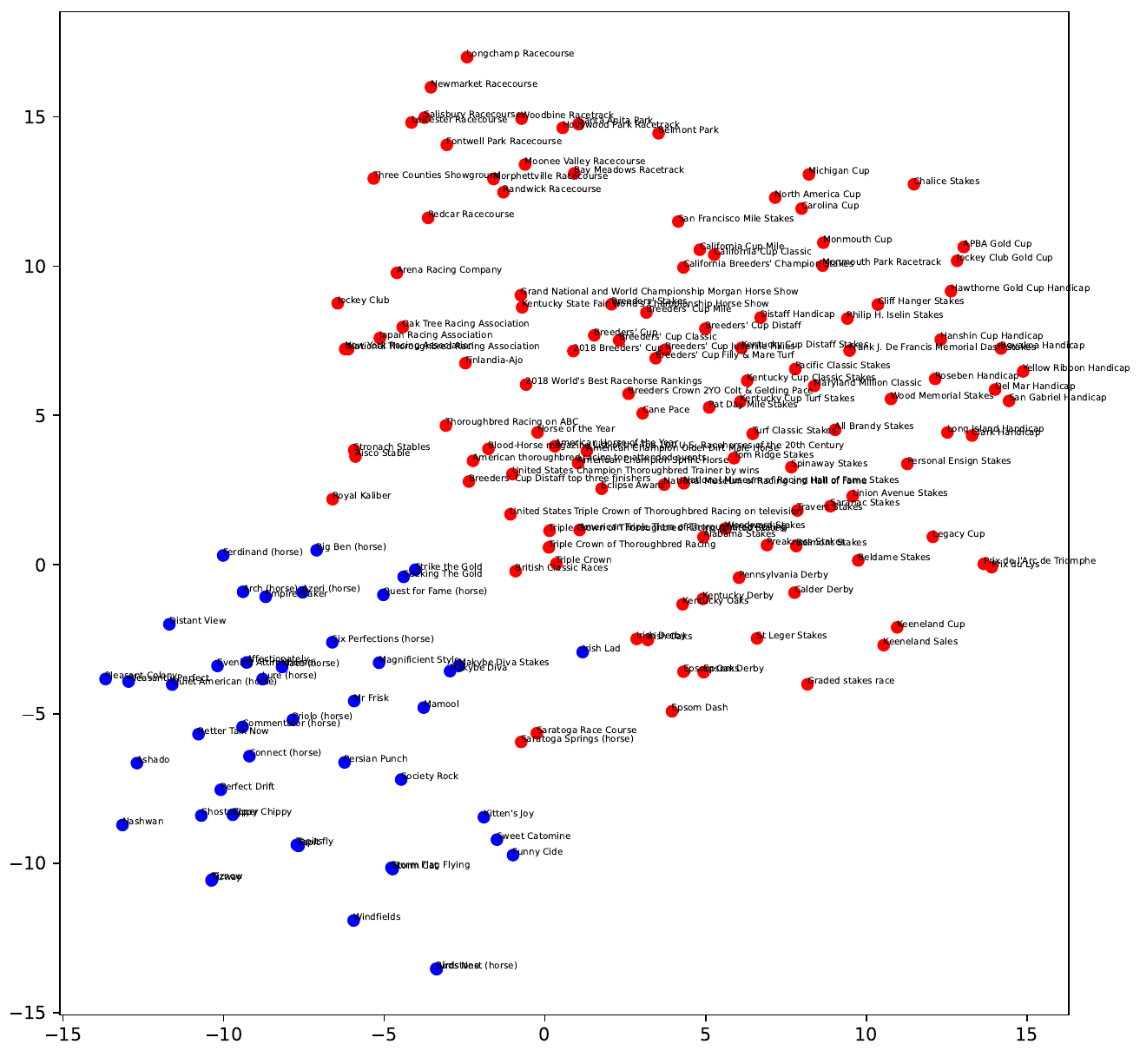}
  \caption{Famous racing horses merged with other horse racing entities.} 
  \label{fig:exp3_horse_racing}
\end{figure*}

\paragraph{Results and Analysis}
The results in Figure \ref{fig:pl_bias} highlight that \textbf{SAFARI} achieves the best balance between classification performance and computational efficiency for political bias detection. 
\textbf{SAFARI} achieves the highest F1 score (0.45) while maintaining a low training time (21.9 seconds), making it the most practical choice. 
In comparison, \textbf{MLP} reaches a slightly lower F1 score (0.39) but incurs a significantly higher training time (1,033.28 seconds). 
Other classifiers exhibit varying efficiency-performance trade-offs: \textbf{SVM} and \textbf{KNN} both have an F1 score of 0.36, yet \textbf{SVM} demands extensive training time (27.29 hours), whereas \textbf{KNN} is the fastest (1.64 seconds) but offers weaker classification accuracy. 
\textbf{RF} performs the worst, with an F1 score of 0.29 and a substantial training time of 1,684.05 seconds.
These results underscore the effectiveness of \textbf{SAFARI}, making it scalable and practical for political bias detection tasks.

Figure \ref{fig:pl_bias_all} further validates \textbf{SAFARI}'s superior and well-balanced performance in political bias detection compared to standard classifiers. 
\textbf{SAFARI} maintains stable F1 scores across all five categories (ranging from 0.37 to 0.51), whereas standard classifiers exhibit substantial fluctuations and consistently poor performance in lean positions.
Notably, \textbf{SVM} and \textbf{RF} achieve only 0.01--0.04 F1 scores for Lean Left and Lean Right categories, with only \textbf{KNN} performing slightly better at 0.21.
Furthermore, while some standard classifiers, such as \textbf{SVM} and \textbf{MLP}, show relatively higher F1 scores for center classification, they struggle significantly with Lean Left and Lean Right classifications.
In contrast, \textbf{SAFARI} delivers robust and consistent performance across the entire political spectrum. 
This advantage stems from its unique approach of using \textbf{SFSes} to represent different political categories. 

Unlike standard classifiers with rigid decision boundaries, \textbf{SAFARI} uses subspaces whose basis vectors naturally encode both shared and category-specific semantic patterns.
Common linguistic structures or ideological overlaps between categories emerge as shared directions within the subspaces, while category-specific traits are preserved in distinct dimensions.
This enables \textbf{SAFARI} to recognize when an article aligns semantically with multiple political categories, supporting nuanced, context-aware classification across the political spectrum.

\section{Component Analysis of Semantic Shift}
\label{appendix:dis-dc}

\paragraph{Experimental Setup}
To dissect the inner workings of Semantic Shift, we conduct a controlled experiment that quantifies the individual contributions of its two components: \textbf{Dimensional Importance Shift (DIS)} and \textbf{Directional Change (DC)}. 
Our analysis is based on 9,778 valid samples drawn from \textbf{SAFARI}'s clustering process, which reveals distinct patterns in their respective impacts.
The results are presented in Table \ref{tab:dis-dc-stats}.

\begin{table}[t]
\centering
\small
\renewcommand{\arraystretch}{1.0}
\caption{Statistics of the contributions of the DIS and DC components.}
\label{tab:dis-dc-stats}
\begin{tabular}{lcc}
  \toprule
  \textbf{Metric} & \textbf{DIS} & \textbf{DC}\\
  \midrule
  Mean & 5.60 & 3.19\\
  Median & 3.97 & 2.36\\
  Standard Deviation & 7.41 & 3.25\\
  \midrule
  Log Contribution & 59.3\% & 40.7\%\\
  \bottomrule
\end{tabular}
\end{table}

\paragraph{Results and Analysis}
Our findings indicate that DIS (5.60) is the dominant factor in stability measurement, contributing approximately 1.8$\times$ more than DC (3.19) on average. 
The right-skewed distribution, evidenced by lower median values compared to means, indicates the presence of significant outliers where DIS is dramatically more influential. 
While DIS (59.3\%) demonstrates greater overall impact, accounting for nearly 60\% of the effect, DC (40.7\%) remains a substantial contributor at approximately 40\%. 
This confirms that both mechanisms play significant roles in the stability dynamics, though with DIS exerting the stronger influence in most scenarios.

\section{Parameter Study about the Dynamic Threshold Mechanism in SAFARI}
\label{appendix:dyn-threshold}

To robustly detect semantic transitions throughout clustering, \textbf{SAFARI} employs a dynamic thresholding mechanism based on a sliding window. 
Rather than using a fixed window size across all iterations, it applies a recursive divide-and-conquer strategy: the sequence of semantic shifts is iteratively split into smaller segments based on distributional imbalance between halves. 
This process continues until each segment either reaches a predefined minimum size or exhibits sufficiently balanced distribution.
This adaptive strategy prevents extreme Semantic Shifts in later iterations from overshadowing earlier, subtler transitions, ensuring more balanced detection across the entire clustering process.

We study how two parameters affect the behavior of this dynamic threshold mechanism: (1) the \textbf{Standard Deviation Multiplier (SDM)} and (2) the \textbf{Minimum Window Size (MWS)} used to define the dynamic threshold.
We evaluate the uniformity of detected semantic shifts across different settings to identify configurations that balance sensitivity to meaningful changes with temporal consistency.

\paragraph{Experimental Setup} 
To evaluate the uniformity of Semantic Shifts, we use two complementary metrics: 
(1) \textbf{Coefficient of Variation (CV)}: measures dispersion relative to the mean; lower values imply greater uniformity.
(2) \textbf{Max/Min Ratio}: captures the full range of detected shift magnitudes; smaller values indicate better balance across iterations.
Results are shown in Table \ref{tab:std_multiplier}.

\begin{table}[t]
\centering
\small
\renewcommand{\arraystretch}{1.0}
\caption{Performance metrics across standard multipliers.}
\label{tab:std_multiplier}
\begin{tabular}{ccccc}
  \toprule
  \textbf{SDM} & \textbf{MWS} & \textbf{CV $\downarrow$} & \textbf{Max/Min Ratio $\downarrow$} \\
  \midrule
  3.0 & 50-200 & 3.75 & 1,540.6 \\
  2.5 & 50-200 & 4.42 & 1,811.3 \\
  2.0 & 50-200 & 5.10 & 2,193.1 \\
  1.5 & 50-200 & 5.84 & 2,778.6 \\
  1.0 & 50-200 & 6.83 & 3,599.7 \\
  0.5 & 50-200 & 6.78 & 4,384.8 \\
  \bottomrule
\end{tabular}
\end{table}

\paragraph{Results and Analysis} 
The configuration with a standard deviation multiplier of 3.0 consistently achieves the most uniform distribution of Semantic Shifts (CV = 3.75, Max/Min ratio = 1,540.6), regardless of the minimum window size.
This setting still ensures that no single phase of the clustering process dominates the detection of \textbf{SFSes}.
Overall, our parameter study reveals that achieving uniform semantic shift distribution depends primarily on the standard deviation multiplier rather than the minimum window size configuration. 
The multiplier of 3.0 effectively prevents later iterations from dominating the analysis while maintaining sensitivity to meaningful semantic changes throughout the clustering process.

\section{More Analysis on Hierarchical Structure}
\label{appendix:more-hierarchy}

We explore the differences between the hierarchical semantic structures identified by \textbf{SAFARI} in embedding spaces and the more intuitive hierarchies found in natural human language.
In human language, semantics typically follow a logical hierarchy, progressing from specific, concrete entities to more abstract concepts, much like an ontology. 
However, in embedding spaces, this progression is not always intuitive. 
The distinction between \emph{specific} and \emph{abstract} depends more on the data and model than on human reasoning, often leading to groupings that diverge from what we would expect based on natural language understanding.

For example, as illustrated in Figure 7, USA basketball teams are first grouped with USA football teams, and later, sports teams from various locations are merged, as shown in Figure 8. 
This follows a logical hierarchical structure, from more specific categories to broader ones. 
Yet, as shown in Figure \ref{fig:exp3_horse_racing} (at iteration 19,790), entities such as horse racing clubs, companies, and events (e.g., \term{Jockey Club}) are merged with famous racing horses. 
This merging of horse racing happens thousands of iterations after the merging of football and basketball teams in the USA. Following the ontology-like progression, we would expect more abstract concepts.  However, horse racing is not a more abstract concept compared with other sports.

These examples illustrate that the hierarchical structures emerging from embedding spaces are governed by the model's learned representations rather than human-designed logic.
While some align intuitively with natural semantic categories, others can be surprising--revealing how models encode relationships that reflect statistical regularities in the data rather than explicit reasoning.
This underscores the need for cautious interpretation: semantic hierarchies derived from embeddings may not faithfully mirror human conceptual structures and should be analyzed with an awareness of the model's inductive biases.


\end{document}